\documentclass[11pt]{article}

\usepackage[utf8]{inputenc}
\usepackage[T1]{fontenc}
\usepackage{amsmath}
\usepackage{amssymb}
\usepackage{amsfonts}
\usepackage{mathtools}
\usepackage{amsthm}
\usepackage{bm}
\usepackage{graphicx}
\usepackage{url}
\usepackage{booktabs}
\usepackage{nicefrac}
\usepackage{microtype}
\usepackage[dvipsnames]{xcolor}
\definecolor{hunyuanblue}{HTML}{1E4A8F}
\usepackage{setspace}
\usepackage[font=small,labelfont=bf,tableposition=top]{caption}
\usepackage[font=scriptsize,skip=1pt,subrefformat=parens]{subcaption}
\usepackage{colortbl}
\usepackage{ragged2e}
\usepackage{makecell}
\usepackage{tabularx}
\usepackage{textcomp}
\usepackage{algorithm}
\usepackage{algpseudocode}
\usepackage{tabu}
\usepackage[shortlabels]{enumitem}
\usepackage{multirow}
\usepackage{wrapfig}
\usepackage{pifont}
\usepackage{bbding}
\usepackage{arydshln}
\usepackage{fontawesome}
\usepackage{footnote}
\usepackage{tablefootnote}
\usepackage{soul}
\usepackage[most]{tcolorbox}
\usepackage{placeins}
\usepackage{xspace}
\usepackage{fancyhdr}

\usepackage{hunyuan}
\usepackage[capitalize,noabbrev]{cleveref}

\makeatletter
\long\def\@makecaption#1#2{%
  \vskip 10pt
  \setbox\@tempboxa\hbox{#1: #2}%
  \ifdim \wd\@tempboxa >\hsize
    \noindent #1: #2\par
  \else
    \hbox to\hsize{\hfil\box\@tempboxa\hfil}%
  \fi}
\makeatother

\hypersetup{
  colorlinks=true,
  linkcolor=hunyuanblue,
  citecolor=hunyuanblue,
  urlcolor=hunyuanblue,
}

\makeatletter
\def\section{\@startsiction{section}{1}{\z@}{-0.24in}{0.10in}
             {\large\bf\raggedright\color{hunyuanblue}}}
\def\subsection{\@startsection{subsection}{2}{\z@}{-0.20in}{0.08in}
                {\normalsize\bf\raggedright\color{hunyuanblue}}}
\makeatother

\definecolor{abstractbg}{HTML}{F0F7FC}
\definecolor{colRed}{HTML}{C00000}
\definecolor{colGreen}{HTML}{95BAA6}
\definecolor{colBlue}{HTML}{4B8DBC}
\definecolor{colYellow}{HTML}{E2CD89}
\definecolor{colSky}{HTML}{A7CAEA}
\definecolor{colOlive}{HTML}{A2CD5A}
\definecolor{colPurple}{HTML}{800080}
\definecolor{colOrange}{HTML}{FFA500}
\definecolor{colAzure}{HTML}{F0FFFF}
\definecolor{colGray}{gray}{0.5}
\definecolor{mygreen}{HTML}{95BAA6}
\definecolor{mydarkgreen}{rgb}{0.02,0.6,0.02}

\fancypagestyle{plain}{%
  \fancyhf{}%
  \fancyfoot[C]{\thepage}%
}
\fancypagestyle{firststyle}{%
  \fancyhf{}%
  \fancyhead[L]{\raisebox{-18.5mm}[0pt][0pt]{\includegraphics[height=12mm]{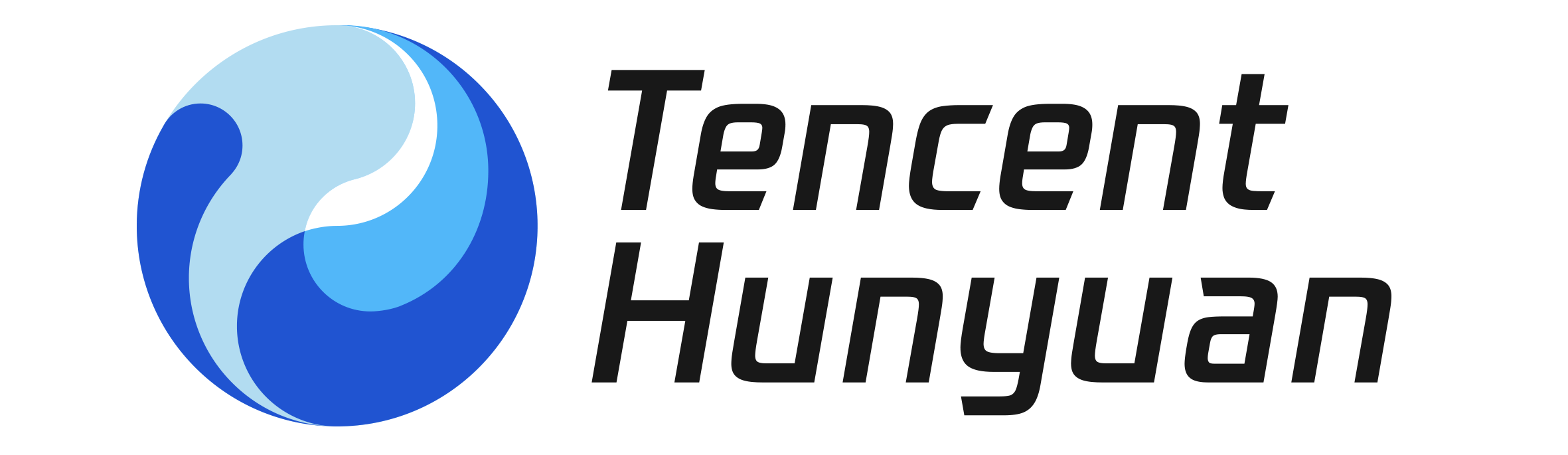}}}%
  \fancyfoot[C]{\thepage}%
}

\newcolumntype{Y}{>{\raggedright\arraybackslash}X}

\newcommand{\txtGreen}[1]{\textcolor{colGreen}{#1}}
\newcommand{\txtBlue}[1]{\textcolor{colBlue}{#1}}
\newcommand{\txtYellow}[1]{\textcolor{colYellow}{#1}}

\newcommand{\bx}[0]{{\boldsymbol{x}}}
\newcommand{\bmu}[0]{{\boldsymbol{\mu}}}
\newcommand{\bv}[0]{{\boldsymbol{v}}}

\newcommand{\bepsilon}[0]{{\boldsymbol{\epsilon}}}

\tcbset{
  greenhighlight/.style={
    colback=mygreen!10,
    colframe=mygreen,
    boxrule=0.8pt,
    arc=1.5mm,
    left=1mm, right=1mm,
    top=0.5mm, bottom=0.5mm,
    fonttitle=\bfseries,
    enhanced,
    rounded corners
  }
}

\theoremstyle{definition}

\usepackage{amsmath,amsfonts,bm}

\usepackage{pifont}

\def\eqref#1{(\ref{#1})}

\def\1{\bm{1}}

\def\rmI{{\mathbf{I}}}

\DeclareMathAlphabet{\mathsfit}{\encodingdefault}{\sfdefault}{m}{sl}
\SetMathAlphabet{\mathsfit}{bold}{\encodingdefault}{\sfdefault}{bx}{n}

\newcommand{\E}{\mathbb{E}}

\DeclareMathOperator*{\argmin}{arg\,min}

\usepackage{graphicx}
\usepackage{url}            %
\usepackage{booktabs}       %
\usepackage{nicefrac}       %
\usepackage{microtype}      %
\usepackage{wrapfig}

\usepackage{enumitem}

\usepackage{multirow}
\usepackage{tablefootnote}

\usepackage{color}
\usepackage{colortbl}
\usepackage{xcolor}

\definecolor{blgrey}{rgb}{0.6,0.6,0.6}
\definecolor{bblue}{rgb}{0.855,0.933,0.98}
\definecolor{dblue}{HTML}{5297D6}
\definecolor{gainred}{rgb}{0.1,0.5,0.3}
\definecolor{citecolor}{HTML}{0071BC}
\definecolor{linkcolor}{HTML}{ED1C24}

\usepackage{listings}
\usepackage{color}

\definecolor{dkcyan}{cmyk}{1,0,0,.25}
\definecolor{dkgreen}{rgb}{0,0.6,0}
\definecolor{gray}{rgb}{0.5,0.5,0.5}
\definecolor{mauve}{rgb}{0.58,0,0.82}

\newcommand{\DKL}{\mathbb{D}_{\mathrm{KL}}}

\newcommand{\papertitle}{Reinforcing Few-step Generators via\\Reward-Tilted Distribution Matching}
\newcommand{\paperauthors}{%
  \textbf{Yushi Huang}$^{1,2,*}$ \quad
  \textbf{Xiangxin Zhou}$^{1,*\,\dagger}$ \quad
  \textbf{Ruoyu Wang}$^{1,3,*}$\\[4pt]
  \textbf{Chi Zhang}$^{3}$ \quad
  \textbf{Jun Zhang}$^{2}$ \quad
  \textbf{Tianyu Pang}$^{1,\ddagger}$%
}
\newcommand{\paperaffiliations}{%
  $^1$Tencent Hunyuan \quad $^2$Hong Kong University of Science and Technology \\[2pt]
  $^3$Westlake University \\[6pt]
  {\small $^*$Equal contribution \quad $^\dagger$Project Lead \quad $^\ddagger$Corresponding author}%
}

\begin{document}

\thispagestyle{firststyle}
\vspace*{0.25cm}
{\color{hunyuanblue}\hrule height 0.6pt}
\vskip 6mm
\begin{center}
{\LARGE\bfseries \papertitle\par}
\end{center}
\vskip 3mm
{\color{hunyuanblue}\hrule height 0.6pt}
\vskip 6mm
\begin{center}
\paperauthors\\[4pt]
{\small \paperaffiliations}
\end{center}
\vskip 6mm

\begin{tcolorbox}[
  colframe=abstractbg,
  colback=abstractbg,
  boxrule=0pt,
  arc=2mm,
  enhanced,
  top=12pt,
  bottom=12pt,
  left=15pt,
  right=15pt,
  width=\textwidth,
]
\textbf{Abstract.}\quad

Recent advances in few-step diffusion distillation have enabled efficient image generation, yet aligning these models with human preferences remains challenging. We propose \emph{Reward-Tilted Distribution Matching Distillation} (RTDMD), a two-stage framework that unifies distribution matching distillation
with reward-guided reinforcement learning for few-step flow generators. We show that minimizing the KL divergence to a reward-tilted teacher distribution naturally decomposes into a distribution matching term and a reward maximization term.
In the first stage, we introduce \emph{Ambient-Consistent Distribution Matching Distillation} (AC-DMD), which performs subinterval-wise distribution matching and augments the fake score objective with a consistency regularizer to help the fake score model track the shifting generator distribution under limited updates. In the second stage, we jointly optimize both terms: for the reward maximization term, we derive a hybrid policy gradient that combines a GRPO-style estimator for the stochastic intermediate transitions with direct reward backpropagation through the deterministic final step, and further introduce \emph{step-subset GRPO} (SubGRPO) to reduce variance. Experiments on SD3, SD3.5, and FLUX.2 demonstrate that RTDMD establishes new state-of-the-art results across preference, aesthetic, and compositional metrics with only 4 inference steps, outperforming previous few-step text-to-image generation methods. Code and models are available at \url{https://github.com/Harahan/RTDMD}.

\vskip 8pt
\textbf{Date:} \today
\end{tcolorbox}

\section{Introduction}

Diffusion~\citep{ddpm,score-base-diff} and flow-based generative models~\citep{flow_matching,liu2022flow} have achieved remarkable progress in text-to-image generation.
Modern diffusion and rectified-flow systems~\citep{sd3,flux,sdxl} can synthesize realistic and semantically aligned images, but their iterative sampling procedures typically require tens of denoising or flow-integration steps~\citep{ddim,ddpm}.
This high sampling cost limits their deployment in latency-sensitive applications such as interactive content creation, on-device generation, and real-time visual systems.

\begin{figure}[!ht]
   \centering
     \includegraphics[width=\textwidth]{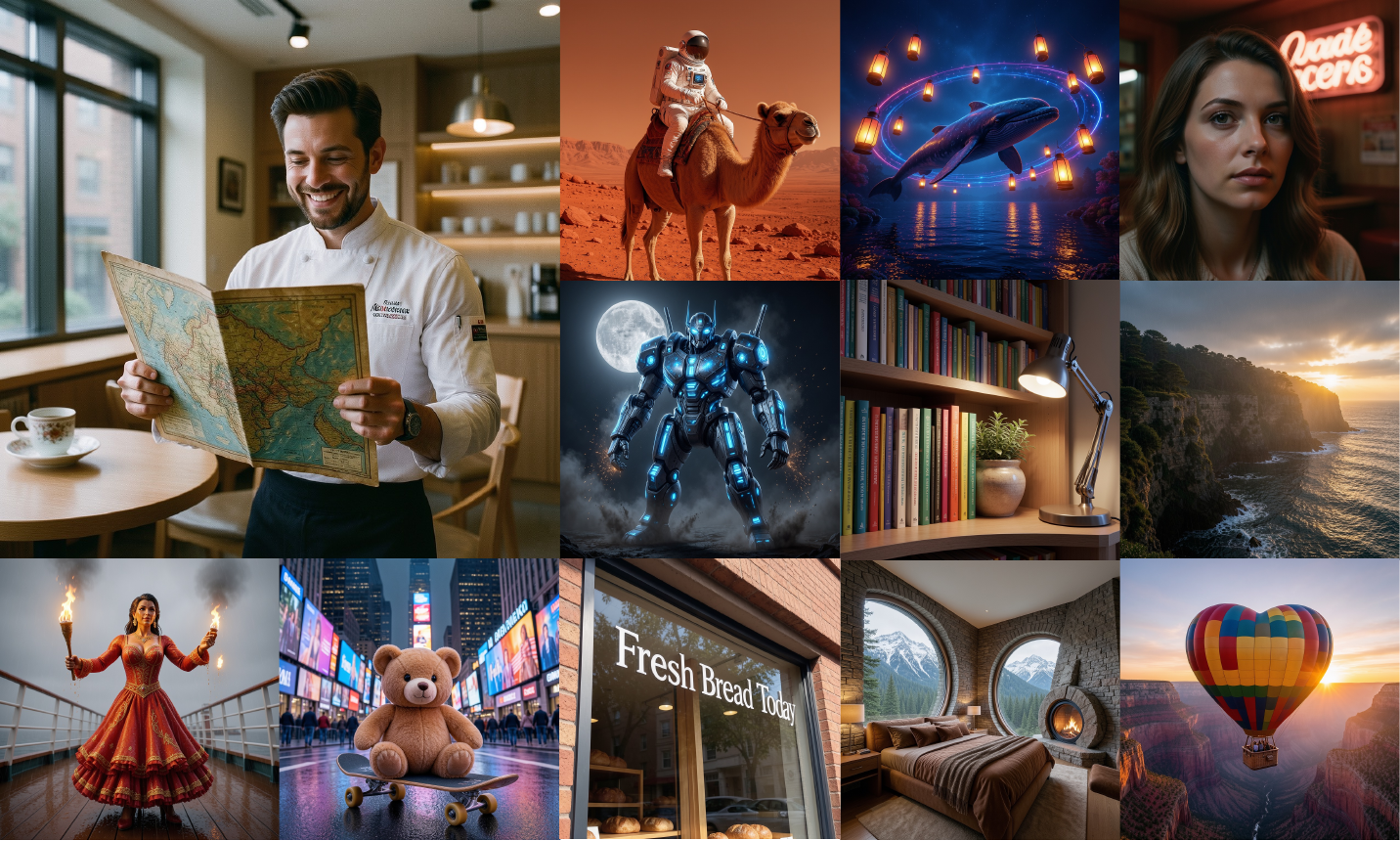}
     \caption{Visual generations produced by our RTDMD method under 4 NFE on FLUX.2 4B~\citep{flux-2} without applying classifier-free guidance (CFG)~\citep{ho2022classifierfreediffusionguidance}. More visual results can be found in App.~\ref{app:more_qualitative}.}
    \label{fig:teaser}
\end{figure}

To improve efficiency, recent works distill pretrained multi-step models into few-step generators~\citep{salimans2022progressive,lcm,sauer2024adversarial,sauer2024fast,tdm,flash-dmd,fan2026phaseddmdfewstepdistribution}.
Among them, Distribution Matching Distillation (DMD)~\citep{dmd,dmd2,decoupled-dmd,senseflow} trains a student to match the teacher's output distribution via a learned fake score model.
Orthogonally, reinforcement learning (RL) aligns generative models with human preferences~\citep{ddpo,fan2023dpok,flowgrpo,dancegrpo,xu2023imagereward,hpsv2,clark2023directly,wallace2024diffusion}.
Recent efforts combine distribution matching with reward optimization~\citep{dmdr,luo2026tdmr1reinforcingfewstepdiffusion,dong2026guidingdistributionmatchingdistillation,fan2026rtextdmreconceptualizingdistributionmatching}, aiming to retain the teacher's generative prior while steering the student toward higher-reward outputs.

However, reward-guided few-step generation remains challenging for two reasons.
First, in few-step generation, the intermediate latents at non-terminal timesteps are inherently noisy. The fake score model in DMD must therefore be trained on these noisy intermediates rather than clean samples. Moreover, the generator distribution shifts at every training iteration, requiring the fake score to continuously track a moving target under a limited compute budget, which makes the cold-start distillation signal unreliable.
Second, reward optimization must respect the hybrid nature of the sampling dynamics: intermediate steps are stochastic due to injected noise, while the final step is deterministic (terminal noise level is zero). Optimizing only the stochastic steps~\citep{flowgrpo,mixgrpo} or only the deterministic final mapping~\citep{xu2023imagereward,dmdr} are both suboptimal; a tailored estimator that accounts for the full trajectory is needed.

In this work, we propose \emph{Reward-Tilted Distribution Matching Distillation} (RTDMD), a two-stage framework for training high-quality few-step flow generators.
We show that minimizing the KL divergence to a \textit{reward-tilted teacher distribution} (defined in Eq.~\eqref{eq:reward_tilted_teacher_distribution}) naturally decomposes into a distribution matching term and a reward maximization term, providing a principled unification of distillation and RL.
In the first stage, we introduce \emph{Ambient-Consistent DMD} (AC-DMD) as a stable cold start. AC-DMD performs distribution matching on each time subinterval independently, and augments the fake score objective with a consistency regularizer~\citep{daras2023consistent,daras2024consistent} that couples predictions across timesteps. This helps the fake score model track the shifting generator distribution more effectively under limited updates.
In the second stage, we jointly optimize both terms via a hybrid policy gradient that combines GRPO-style updates for stochastic intermediate transitions with direct reward backpropagation through the deterministic final step, and further introduce step-subset GRPO (SubGRPO) with shared noise to reduce variance.

Comprehensive experiments on SD3-M~\citep{sd3}, SD3.5-M~\citep{sd35}, and FLUX.2 4B~\citep{flux-2} demonstrate that RTDMD achieves state-of-the-art few-step generation quality under 4-step sampling. Notably, our distilled FLUX.2 4B surpasses the full FLUX.2 9B (50-step) across most benchmarks.

\section{Preliminaries}

\subsection{Diffusion and Flow Models}
Diffusion and flow-based generative models~\citep{ddim, ddpm} define a continuous probability path $\{p_t\}_{t\in[0,1]}$ that connects the data distribution $p_0$ to a simple prior $p_1$ (typically a standard Gaussian). A common Gaussian interpolation is $q_t(\bx_t\mid\bx_0)=\mathcal{N}(\alpha_t\bx_0,\sigma_t^2\mathbf{I})$,
where $\bx_0\sim p_0$ and $\bepsilon\sim\mathcal{N}(\mathbf{0},\mathbf{I})$, so that $\bx_t=\alpha_t\bx_0+\sigma_t\bepsilon$ and $p_t(\bx_t)=\int q_t(\bx_t\mid\bx_0)p_0(\bx_0)\,d\bx_0$. Here $\alpha_t$ and $\sigma_t$ specify the noise schedule.

Sampling is described by the probability-flow Ordinary Differential Equation (PF-ODE)~\citep{flow_matching}
$\frac{\mathrm{d}\bx_t}{\mathrm{d}t}=\bv(\bx_t,t)$,
where $\bv(\bx,t)$ is the \emph{marginal velocity field} transporting the density $p_t$. Its relation~\citep{sit} to the \emph{score function}
$s(\bx,t)=\nabla_{\bx}\log p_t(\bx)$ is
\begin{equation}
\bv(\bx,t)
=
\frac{\dot{\alpha}_t}{\alpha_t}\bx
+
\left(\frac{\dot{\alpha}_t}{\alpha_t}\sigma_t^2-\dot{\sigma}_t\sigma_t\right)s(\bx,t).
\end{equation}
Thus, the score function indicates how the density changes locally, while the marginal velocity determines how samples move along the probability path.

In this work, we adopt flow matching with the rectified schedule~\citep{flow_matching,sd3}, namely $\alpha_t=1-t$ and $\sigma_t=t$, which gives the linear path $\bx_t=(1-t)\bx_0+t\bepsilon$. For a fixed pair $(\bx_0,\bepsilon)$, the conditional velocity is simply $\bv_t(\bx_t\mid\bx_0,\bepsilon)=\bepsilon-\bx_0$, and the marginal velocity satisfies
\begin{equation}
\bv(\bx,t)=\mathbb{E}[\bepsilon-\bx_0\mid \bx_t=\bx]
=-\frac{1}{1-t}\bx-\frac{t}{1-t}s(\bx,t).
\label{eq:v_s}
\end{equation}

Flow matching trains a neural velocity field $\bv_\theta$ by regressing it to the conditional target velocity~\citep{flow_matching}:
\begin{equation}
\mathcal{L}_{\mathrm{CFM}}(\theta)
=
\mathbb{E}_{t,\bx_0,\bepsilon}
\bigl[
w(t)\|\bv_\theta(\bx_t,t)-(\bepsilon-\bx_0)\|_2^2
\bigr],
\quad \bx_t=(1-t)\bx_0+t\bepsilon,
\end{equation}
where $w(t)$ is an optional weighting function. This objective is commonly referred to as the conditional flow matching (CFM) loss.

\subsection{Distribution Matching Distillation}

Sampling from a pretrained flow model typically requires many function evaluations, motivating distillation into a few-step generator.
Let $\bv_\psi$ denote the pretrained \emph{teacher} velocity field, and let $p_\psi$ be its induced distribution.

Distribution Matching Distillation (DMD)~\citep{dmd,dmd2} trains a few-step \emph{student} generator $G_\theta$ with $\bx_0 = G_\theta(\bepsilon)$, where $\bepsilon \sim \mathcal{N}(\mathbf{0},\mathbf{I})$, so that its induced distribution $p_\theta$ matches $p_\psi$.
A natural objective is the reverse Kullback–Leibler (KL) $\DKL(p_\theta \,\|\, p_\psi)$.
Because this divergence can be difficult to optimize directly in data space when the two distributions have limited overlap, DMD instead compares their noised marginals in an ambient space.
Specifically, for $t \sim \mathrm{Unif}[0,1]$ and an independent $\bepsilon' \sim \mathcal{N}(\mathbf{0},\mathbf{I})$, it defines
$\bx_t = (1-t)\bx_0 + t\bepsilon'$,
which induces marginals $p_{\theta,t}$ and $p_{r,t}$ for the student and teacher, respectively.
The resulting time-averaged reverse KL yields a generator gradient proportional to the difference between the student and teacher scores at $\bx_t$.

Using Eq.~\eqref{eq:v_s} to convert between velocities and scores, DMD writes the generator update as
\begin{equation}
\label{eq:dmd_s}
\nabla_\theta\mathcal{L}_{\text{DMD}}
=
\mathbb{E}_{\bepsilon,t,\bepsilon'}\big[
w_t \alpha_t \bigl(s_\phi(\bx_t,t)-s_\psi(\bx_t,t)\bigr)\nabla_\theta G_\theta(\bepsilon)
\big],
\end{equation}
where $w_t>0$ is a time-dependent weight and $\alpha_t = 1-t$ comes from $\partial \bx_t/\partial \bx_0$.

Since the student score is not available in closed form, DMD introduces an auxiliary \emph{fake} velocity field $\bv_\phi$ to track the current student distribution $p_\theta$.
Via Eq.~\eqref{eq:v_s}, $\bv_\phi$ defines the fake score $s_\phi$, which serves as a surrogate for the student score in Eq.~\eqref{eq:dmd_s}.
The fake velocity is trained with the conditional flow-matching objective
\begin{equation}
\label{eq:cfm}
\mathcal{L}_f(\phi)
=
\mathbb{E}_{\bepsilon,t,\bepsilon'}\big[
\|\bv_\phi(\bx_t,t)-(\bepsilon'-\bx_0)\|^2
\big].
\end{equation}
At optimum, $\bv_\phi$ recovers the marginal velocity field of the current student distribution, providing the score estimate required by the DMD update.
DMD therefore alternates between updating the student generator $G_\theta$ and training the fake model $\bv_\phi$ to track it.

\section{Method}

We present \emph{Reward-Tilted Distribution Matching Distillation} (RTDMD), a principled framework for training high-quality few-step generators (an overall algorithm can be found in App.~\ref{app:algo}).
Let $p_\psi$ denote the distribution induced by the pretrained teacher model. 
While DMD~\citep{dmd, dmd2} aims to replicate $p_\psi$, the teacher distribution itself is not necessarily aligned with human preferences, which means it can assign equal probability to both high-reward and low-reward samples. A natural remedy is to up-weight high-reward regions of $p_\psi$ while down-weighting low-reward ones. Therefore, we define the \textit{reward-tilted teacher distribution} as
\begin{equation}
    \tilde{p}_\psi(\bx)=\frac{p_\psi(\bx)\exp(\beta r(\bx))}{Z},
    \label{eq:reward_tilted_teacher_distribution}
\end{equation}
where $r(\bx)$ is a scalar reward function, $\beta\ge 0$ controls the reward strength, and $Z$ is the normalizing constant.
We optimize the few-step generator $G_\theta$ by minimizing
$\DKL(p_\theta\|\tilde{p}_\psi)$.
Since $\log \tilde{p}_\psi(\bx)=\log p_\psi(\bx)+\beta r(\bx)-\log Z$, and $Z$ is independent of $\theta$, we have $ \DKL(p_\theta\|\tilde{p}_\psi)=\DKL(p_\theta\| {p}_\psi) - \beta \E_{\hat{\bx}_0\sim p_\theta}[r(\hat{\bx}_0)] + \log Z$ and
\begin{equation}
\nabla_\theta \DKL(p_\theta\|\tilde{p}_\psi)
=
\underbrace{\nabla_\theta \DKL(p_\theta\| {p}_\psi)}_{\text{distribution matching}}
-
\beta \underbrace{\nabla_\theta \E_{\hat{\bx}_0\sim p_\theta}[r(\hat{\bx}_0)]}_{\text{reward maximization}}.
\label{eq:rtdmd-decomposition}
\end{equation}
This decomposition shows that minimizing the KL to the reward-tilted distribution naturally separates into a \emph{distribution matching} term and a \emph{reward maximization} term. This motivates our two-stage framework: we first perform distribution matching as a cold start via \textit{Ambient-Consistent DMD} (AC-DMD, Sec.~\ref{sec:ac-dm}), and then jointly optimize both terms using a \textit{hybrid policy gradient} with \textit{step-subset GRPO} for the reward term (Sec.~\ref{sec:reinforce_gen}).

\begin{figure}[!ht]
   \centering
     \includegraphics[width=0.85\textwidth]{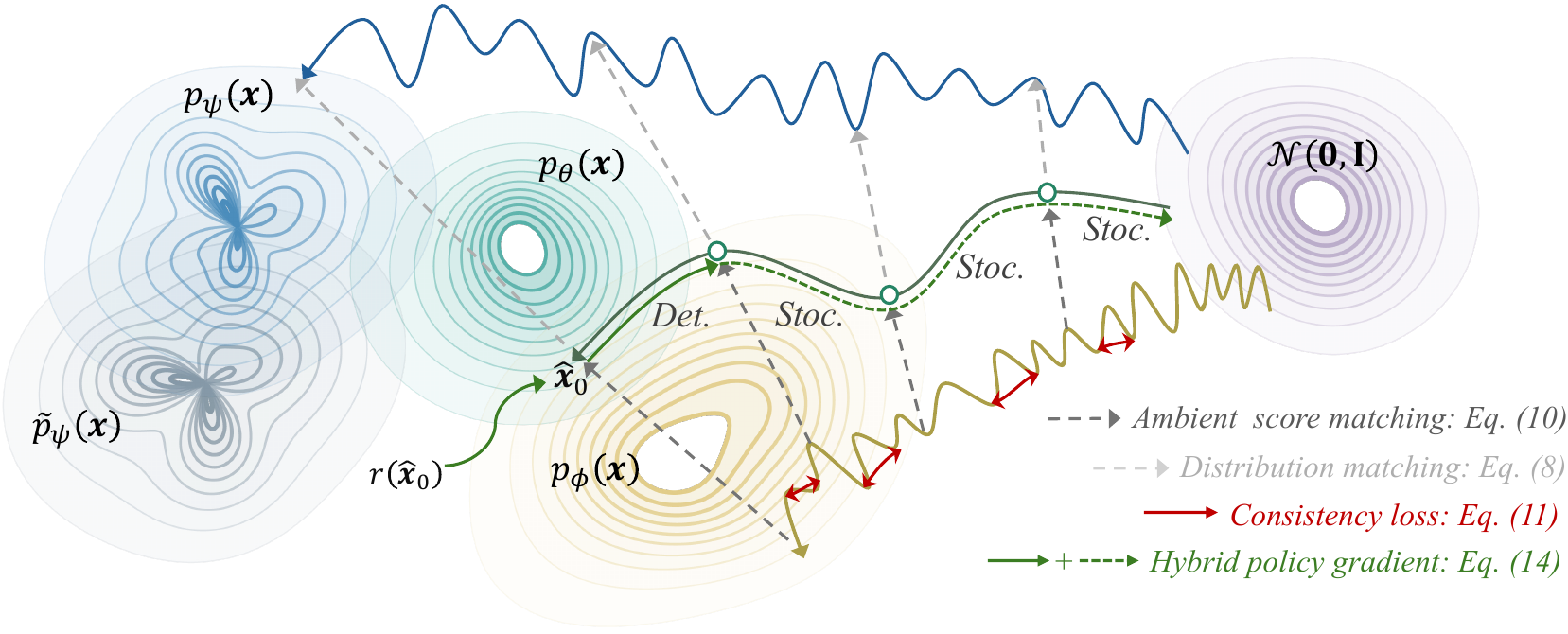}
     \caption{Overview of the proposed RTDMD. ``\textit{Det.}'' means the final deterministic step and ``\textit{Stoc.}'' denotes the stochastic steps (see Sec.~\ref{sec:reinforce_gen}). \txtBlue{Blue}, \txtGreen{green}, and \txtYellow{yellow} trajectories represent the denoising trajectory of the pretrained-teacher, few-step generator, and fake score model, respectively.}
    \label{fig:overview}
\end{figure}

\subsection{Ambient-Consistent Distribution Matching Distillation}\label{sec:ac-dm}
Existing DMD methods adopt either the deterministic Euler ODE sampler~\citep{fan2026phaseddmdfewstepdistribution} or the consistency model (CM) sampler~\citep{dmd, dmd2} for few-step generation. To unify these choices under a single framework and facilitate the subsequent policy-gradient derivation (Sec.~\ref{sec:reinforce_gen}), we first employ coefficient-preserving sampling (CPS)~\citep{wang2025coefficients}, which encompasses both as special cases by a predefined hyperparameter $\eta$ (see App.~\ref{app:cps_formula} for the full formula).
Under CPS, each generation step consists of a denoising prediction followed by noise injection, and it also ensures that the noise level of the latent variable remains consistent with the predefined scheduler at every timestep. 

To be more specific, we use a $K$-step generator with $K=4$ and a decreasing timestep schedule $0=t_K<\cdots<t_1<t_0=1$.
Starting from $\hat{\bx}_{t_0}\sim\mathcal N(\mathbf 0,\mathbf I)$, step $k$ takes the current latent $\hat{\bx}_{t_{k-1}}$ and outputs an $x$-prediction, $\hat{\bx}_{\mathrm{pred}}^{(k)} = G_\theta(\hat{\bx}_{t_{k-1}}, t_{k-1}),\,k=1,\dots,K,$
which is the sampler output under the $x$-parameterization, rather than a clean sample itself. The next latent $\hat{\bx}_{t_k}$ is a linear combination of the $x$-prediction $\hat{\bx}_{\mathrm{pred}}^{(k)}$, the current latent $\hat{\bx}_{t_{k-1}}$, and a freshly sampled Gaussian noise $\bepsilon_k$. Here, $\eta\in[0,1]$ controls the sampling stochasticity: $\eta=0$ recovers the deterministic Euler sampler, while $\eta>0$ injects noise at each step.

\noindent\textbf{Ambient distribution matching distillation.} Since CPS ($\eta>0$) injects noise at intermediate steps, the generator output $\hat{\bx}_{t_k}$ at $t_k > 0$ could no longer be a clean sample but a noisy latent at noise level $t_k$. The standard   
DMD~\citep{dmd, dmd2}, which assumes clean samples and performs score matching over the full interval $[0,1]$, is therefore no longer directly applicable. We re-derive the distribution
matching objective on the subinterval $[t_k, 1]$ conditioned on the noisy intermediate, and term this \emph{Ambient Distribution Matching Distillation} (A-DMD). 

Concretely, let $p_\theta^{(k)}$ denote the distribution of $\hat{\bx}_{t_k}$ after $k$ steps. To train step $k$, we match the teacher distribution on the subinterval $[t_k,1]$. Under the rectified schedule~\citep{flow_matching,sd3}, we re-noise $\hat{\bx}_{t_k}$ to any level $t \in [t_k, 1]$ via $\bx_t^{(k)} = \alpha_k(t)\hat{\bx}_{t_k} + \sigma_k(t)\bepsilon$, where $\alpha_k(t) = \frac{1-t}{1-t_k}$, $\sigma_k(t) = \frac{t-t_k}{1-t_k}$, and $\bepsilon \sim \mathcal{N}(\mathbf{0}, \mathbf{I})$. Let $p_{\theta,t}^{(k)}$ denote the resulting student marginal at noise level $t$. We minimize the reverse KL:
\begin{equation}
\mathcal L_{\mathrm{gen}}^{(k)}(\theta)
=
\mathbb E_{t\sim\mathcal U[t_k,1]}
\bigl[\lambda_k(t)\,\mathrm{KL}(p_{\theta,t}^{(k)}\,\|\,p_{\psi,t})\bigr],
\label{eq:gen_kl}
\end{equation}
where $p_{\psi,t}$ is the teacher marginal and $\lambda_k(t)$ is a timestep-dependent weight.

Since the student score $\nabla_{\bx}\log p_{\theta,t}^{(k)}(\bx)$ is intractable, following DMD \citep{dmd,dmd2}, we introduce a fake score model $s_\phi(\bx,t)$ to approximate it, yielding the practical generator gradient
\begin{equation}
\nabla_\theta \mathcal L_{\mathrm{gen}}^{(k)}
\approx
-
\mathbb E_{t,\bepsilon,\hat{\bx}_{t_k}}
\left[
\lambda_k(t)\alpha_k(t)\,
\bigl(
s_\psi(\bx_t^{(k)},t)-s_\phi(\bx_t^{(k)},t)
\bigr)^\top
\frac{\partial \hat{\bx}_{t_k}}{\partial \theta}
\right].
\label{eq:subinterval-generator-gradient}
\end{equation}
This form (see App.~\ref{app:k_step_objective} for the detailed derivation) makes the training signal entirely local to the subinterval $[t_k,1]$: the teacher score $s_\psi(\bx_t^{(k)},t)$ provides the target direction, while the fake score $s_\phi(\bx_t^{(k)},t)$ compensates for the intractable student marginal score.

To train $s_\phi$, we fit it on the same interval $[t_k,1]$ using denoising score matching (DSM):
\begin{equation}
\mathcal L_{\mathrm{fake}}^{(k)}(\phi)
=
\mathbb E_{t\sim\mathcal U[t_k,1],\,\hat{\bx}_{t_k},\,\bepsilon}
\left[
\omega_k(t)\,
\left\|
s_\phi(\bx_t^{(k)}, t)
-
\nabla_{\bx}\log q_t^{(k)}(\bx \mid \hat{\bx}_{t_k})
\right\|_2^2
\right],
\label{eq:ambient_score_matching}
\end{equation}
where
$\nabla_{\bx}\log q_t^{(k)}(\bx \mid \hat{\bx}_{t_k})
=
\left(\alpha_k(t)\hat{\bx}_{t_k}-\bx\right)/\sigma_k(t)^2$ is the conditional score of the Gaussian corruption kernel, and $\omega_k(t)\!>\!0$ is a timestep-dependent weight (a design choice independent of $\lambda_k(t)$ in Eq.~\eqref{eq:gen_kl}).

\noindent\textbf{Stabilizing fake score training via consistency regularization.} However, when $t_k>0$, the fake score model $s_\phi$ is trained on corrupted intermediate latents $\hat{\bx}_{t_k}$ rather than clean samples. Although the DSM objective in Eq.~\eqref{eq:ambient_score_matching} is theoretically unbiased (its optimal solution is the true student marginal score $\nabla_\bx \log p_{\theta,t}^{(k)}$ as proved in App.~\ref{app:dsm_optimality}), a practical challenge arises: the generator $G_\theta$ is updated concurrently, so the target distribution $p_\theta^{(k)}$ shifts at every training iteration. With only a limited number of fake score updates per generator step, $s_\phi$ must track this moving distribution under a tight sample and compute budget, making accurate estimation difficult.

To stabilize fake score training, we introduce a consistency regularizer~\citep{daras2023consistent,daras2024consistent}. The key insight is that the optimal fake score model
satisfies a self-consistency property (see App.~\ref{app:consistency_proof} for a detailed proof): for any $t'' < t'$, its $x$-prediction at $t'$ must equal the expected $x$-prediction after one reverse-diffusion step to $t''$,
\emph{i.e.}, $\hat{\bx}_\phi(\bx_{t'}, t') = \mathbb{E}_{\tilde{\bx}_{t''} \sim p_\phi(\bx_{t''}\mid \bx_{t'})}[\hat{\bx}_\phi(\tilde{\bx}_{t''}, t'')]$, where $\hat{\bx}_\phi(\bx, t)$ denotes the fake score model's $x$-prediction. This couples the fake score predictions across different timesteps, reducing the effective degrees of freedom and lowering the overall estimation variance.                              

Concretely, writing the fake score model in its $x$-prediction form, we penalize violations of this property:
\begin{equation}                                                                                                                                                                       
\!\!\!\!\!\mathcal L_{\mathrm{cons}}^{(k)}(\phi)
=                                                                                                                                                                                      
\mathbb E_{t',t'',\,\hat{\bx}_{t_k}\!\sim p_\theta^{(k)},\,\bx_{t'}\!\sim q_{t'}^{(k)}(\cdot|\hat{\bx}_{t_k})}
\!\left[        
\left\|                                                   
\hat{\bx}_{\phi}(\bx_{t'},t')
-
\mathbb E_{\tilde{\bx}_{t''}\!\sim p_\phi(\bx_{t''}\mid \bx_{t'})}
\!\bigl[
\hat{\bx}_{\phi}(\tilde{\bx}_{t''},t'')
\bigr]
\right\|_2^2
\right],
\label{eq:consistency_loss}
\end{equation}
where $p_\phi(\bx_{t''}\mid \bx_{t'})$ is the fake score model's reverse transition kernel from $t'$ to $t''$. In practice, we use an approximated estimator following \citet{daras2024consistent} (see App.~\ref{app:consistency_practice}). We choose $t'$ and $t''$ to be close so that the consistency term remains local and can be estimated efficiently with a single transition step. The final fake-score objective is
\begin{equation}
\mathcal L_{\mathrm{fake\mbox{-}total}}^{(k)}(\phi)
=
\mathcal L_{\mathrm{fake}}^{(k)}(\phi)
+
\gamma\,\mathcal L_{\mathrm{cons}}^{(k)}(\phi).
\label{eq:fake_score_total}
\end{equation}

Intuitively, Eq.~\eqref{eq:ambient_score_matching} provides pointwise score supervision at each noise level, while the consistency term couples nearby timesteps by requiring them to predict the same underlying clean sample, thereby reducing the variance of ambient fake-score training.

Overall, we refer to our method as \emph{Ambient-Consistent Distribution Matching Distillation} (AC-DMD), reflecting that the fake score model is trained on noisy intermediate latents (the ``ambient'' setting) and regularized by a consistency loss to improve estimation quality.

\subsection{Reinforcing the Few-step Generator}\label{sec:reinforce_gen}
After the cold start with AC-DMD, we proceed to the second stage: jointly optimizing both terms in Eq.~\eqref{eq:rtdmd-decomposition}. The distribution matching term is handled by AC-DMD as before; we now focus on deriving efficient gradient estimators for the reward maximization term.

\noindent\textbf{Few-step generator as a policy.}
The few-step generator induces a $K$-step policy over the latent trajectory
$\hat{\bx}_{t_0}\!\to\!\hat{\bx}_{t_1}\!\to\!\cdots\!\to\!\hat{\bx}_{t_K}$.
At each step, the CPS update combines the generator's $x$-prediction with the current latent and injected Gaussian noise~\footnote{We set $\eta > 0$ throughout this work, as it naturally introduces stochasticity into the sampling trajectory, which is essential for exploration in reinforcement learning.}.
As a result, for the first $K-1$ steps, the transition defines a Gaussian policy
\begin{equation}
\pi^{(k)}_\theta(\hat{\bx}_{t_k}\mid \hat{\bx}_{t_{k-1}})
=
\mathcal N\!\bigl(\hat{\bx}_{t_k};\,\bmu^{(k)}_\theta(\hat{\bx}_{t_{k-1}}),\,\sigma_k^2 \rmI\bigr),
\qquad k=1,\dots,K-1,
\end{equation}
where $\bmu^{(k)}_\theta$ is determined by the CPS update (App.~\ref{app:cps_formula}) and $\sigma_k = t_k \sin(\eta\pi/2)$.
The final step is deterministic: since $t_K=0$, the noise term vanishes and $\hat{\bx}_0 = G_\theta(\hat{\bx}_{t_{K-1}}, t_{K-1})$.
Therefore, the few-step generative process is a hybrid policy consisting of $K-1$ stochastic Gaussian steps followed by one deterministic step.

\noindent\textbf{Hybrid policy gradient.} As a result, the reward gradient (\emph{i.e.}, $\nabla_\theta \E_{\hat{\bx}_0\sim p_\theta}[r(\hat{\bx}_0)]$ in Eq.~\eqref{eq:rtdmd-decomposition}) naturally decomposes into a contribution from the stochastic intermediate transitions and a contribution from the deterministic final mapping.

Specifically, let
\(
\tau=(\hat{\bx}_{t_0},\hat{\bx}_{t_1},\dots,\hat{\bx}_{t_{K-1}},\hat{\bx}_0)
\)
denote a generated trajectory. Then
\begin{equation}
\begin{aligned}
\nabla_\theta \E_{\hat{\bx}_0\sim p_\theta}[r(\hat{\bx}_0)]
&=
\sum_{k=1}^{K-1}
\E_{\tau\sim p_\theta}\!\left[
r(\hat{\bx}_0)\,
\nabla_\theta \log \pi_\theta^{(k)}(\hat{\bx}_{t_k}\mid \hat{\bx}_{t_{k-1}})
\right] \\
&\quad+
\E_{\tau\sim p_\theta}\!\left[
\bigl(\nabla_{\hat{\bx}_0} r(\hat{\bx}_0)\bigr)^\top
\partial_\theta G_\theta(\hat{\bx}_{t_{K-1}}, t_{K-1})
\right].
\end{aligned}
\label{eq:hybrid-policy-gradient}
\end{equation}
The first term is a REINFORCE-style estimator and accounts for how the parameters affect the distribution of the stochastic intermediate states, while the second term differentiates the deterministic final denoising step (a formal derivation is provided in App.~\ref{app:hybrid_pg}; see Prop.~\ref{prop:hybrid_pg}). Since the reward is typically differentiable, we estimate the second term by directly backpropagating through $G_\theta$:
\begin{equation}               
\nabla_\theta \mathcal{L}_{\mathrm{det}}                  
=
\frac{1}{N}\sum_{i=1}^N
\bigl(\nabla_{\hat{\bx}_0} r(\hat{\bx}_0^{(i)})\bigr)^\top
\partial_\theta G_\theta(\hat{\bx}_{t_{K-1}}^{(i)}, t_{K-1}).
\label{eq:det_grad}
\end{equation}
For the first term, directly using the REINFORCE-style term leads to high variance. Following GRPO~\citep{deepseekmath}, we reduce variance of it by sampling a group of $N$ trajectories $\{\tau_i\}_{i=1}^N$ per prompt and replacing the raw reward $r$ with a group-normalized advantage $A_i = (r_i - \bar{r}) / \operatorname{std}(\{r_j\}_{j=1}^N$, where $r_i = r(\hat{\bx}_0^{(i)})$ is the reward of the $i$-th trajectory and $\bar{r} = \frac{1}{N}\sum_{j=1}^N r_j$.

\noindent\textbf{Step-subset GRPO with shared noise.} However, naive GRPO (see App.~\ref{app:grpo_background} for more details) uses independent noise at every step, so reward differences across trajectories conflate contributions from all steps. Inspired by MixGRPO~\citep{mixgrpo}, we propose \emph{step-subset GRPO with shared noise} (SubGRPO) to further reduce variance by isolating the effect of selected steps. For each prompt, we uniformly sample a subset of stochastic steps
\(
\mathcal{S} \subset \{1,\dots,K-1\}
\)
with \(|\mathcal{S}| = M < K-1\).
The full \(K\)-step trajectory is still rolled out, but only the steps in \(\mathcal{S}\) use independent noise across trajectories; the remaining steps share noise within the group:
\begin{equation}
\tilde{\bepsilon}_k^{(i)}=
\begin{cases}
\bepsilon_k^{(i)}, & k\in\mathcal{S},\\
\bepsilon_k^{\mathrm{grp}}, & k\notin\mathcal{S},
\end{cases}
\qquad
\bepsilon_k^{(i)},\,\bepsilon_k^{\mathrm{grp}}
\sim \mathcal{N}(\mathbf{0},\mathbf{I}),
\label{eq:shared-noise}
\end{equation}
where \(\{\bepsilon_k^{(i)}\}_{i=1}^N\) are independent across trajectories, while \(\bepsilon_k^{\mathrm{grp}}\) is shared by all trajectories in the same group at step \(k\).
Only the selected steps \(k\in\mathcal{S}\) contribute the gradients, yielding
\begin{equation}
\nabla_\theta \mathcal{L}_{\mathrm{stoc}}^{\mathrm{SubGRPO}}
=
\frac{K-1}{M}\cdot \frac{1}{N}\sum_{i=1}^N
\sum_{k\in\mathcal S}
A_i\,
\nabla_\theta
\log \pi_\theta^{(k)}
\!(
\hat{\bx}_{t_k}^{(i)} \mid \hat{\bx}_{t_{k-1}}^{(i)}
).
\label{eq:subgrpo}
\end{equation}
Under the same gradient sample budget, SubGRPO can be viewed as a Rao--Blackwellized variant of the corresponding independent-noise estimator under mild assumptions \citep{raolackwellisation}. Therefore, its gradient estimator typically has a smaller variance.

\noindent\textbf{Total objective.}
Combining Eqs.~\eqref{eq:det_grad}, \eqref{eq:subgrpo}, and \eqref{eq:subinterval-generator-gradient}, the generator in the second stage is updated by descending along:
\begin{equation}
\nabla_\theta \mathcal{L}_{\mathrm{total}}
=
\nabla_\theta \mathcal{L}_{\mathrm{AC\text{-}DMD}}
-
\beta\bigl(\nabla_\theta \mathcal{L}_{\mathrm{stoc}}^{\mathrm{SubGRPO}} + \nabla_\theta \mathcal{L}_{\mathrm{det}}\bigr).
\label{eq:total_loss}
\end{equation}

\section{Experiments}\label{sec:experiments}

\subsection{Implementation Details}\label{sec:implementation-details}

\textbf{Models.} Our experiments are conducted on open-source state-of-the-art (SOTA) text-to-image diffusion models: Stable Diffusion 3-Medium
(\texttt{SD3-M})~\citep{sd3}, Stable Diffusion 3.5-Medium
(\texttt{SD3.5-M})~\citep{sd35} and FLUX.2 4B~\citep{flux-2}. We use $512^2$ as the default resolution unless otherwise specified.

\textbf{Rewards.} For \texttt{SD3-M}, following prior work~\citep{dmdr, tdm}, we train with
HPSv2~\citep{hpsv2} and CLIPScore~\citep{clipscore} rewards on prompts from \texttt{t2i-2M}~\citep{t2i-2M}, and evaluate on
prompts sampled from ShareGPT-4o-Image~\citep{sharegpt4o-image}, reporting CLIPScore, Aesthetic Score~\citep{laion5b},
PickScore~\citep{pickscore}, and HPSv2. Besides, we further validate on the non-differentiable GenEval reward~\citep{geneval} for \texttt{SD3.5-M}.
For FLUX.2 4B, we train with HPSv2, CLIPScore, PickScore, and GenEval rewards, and additionally evaluate on OCR Score, Aesthetic Score, GenEval2~\citep{kamath2025geneval2addressingbenchmark},
ImageReward~\citep{xu2023imagereward}, and HPSv3~\citep{hpsv3} for a thorough assessment.

\textbf{Training.} We finetune the generator initialized from its corresponding pre-trained teacher without CFG~\citep{ho2022classifierfreediffusionguidance} using LoRA~\citep{lora} ($\alpha=32, r=64$) and adopt CPS~\citep{wang2025coefficients} with $\eta=0.9$ (see App.~\ref{app:discussion_sampling} for more discussion). For the cold start stage, we adopt $1.5{\times}10^3$ training iterations for the generator and $\gamma=0.01$. In the second stage, we use $1{\times}10^3$ iterations, and each consists of $48$ groups with a group size of $24$ for \texttt{SD3-M} and \texttt{SD3.5-M}, and $64$ groups with the same group size for FLUX.2 4B. All the experiments are conducted on $8$ or $16$ NVIDIA H20 GPUs. More details can be found in App.~\ref{app:more_details}.

\textbf{Baselines.} We compare our method against SOTA few-step RL approaches, including GDMD~\citep{dong2026guidingdistributionmatchingdistillation}, DMDR~\citep{dmdr}, $R_\text{dm}$~\citep{fan2026rtextdmreconceptualizingdistributionmatching},
TDM-R1~\citep{luo2026tdmr1reinforcingfewstepdiffusion}, and Hyper-SD~\citep{ren2024hyper}. To cover a wider range of baselines, we
further include foundational multi-step base models~\citep{flux-2, imageteam2025zimageefficientimagegeneration, sd35}, RL-only
approaches~\citep{xu2023imagereward}, and few-step distillation methods~\citep{lcm,dmd2,chadebec2025flash,tdm} as baselines. We reproduce the
closed-source $R_\text{dm}$ and evaluate the open-source baselines with their official checkpoints, while the remaining results are
directly taken from GDMD~\citep{dong2026guidingdistributionmatchingdistillation} and TDM-R1~\citep{luo2026tdmr1reinforcingfewstepdiffusion}.

\begin{table}[!ht]\setlength{\tabcolsep}{2pt}
 \renewcommand{\arraystretch}{1.1}
  \centering
  \caption{Quantitative comparison with SOTA approaches on \texttt{SD3-M} with $1024^2$ resolution. We highlight the best and second-best scores in bold and underlined formats, respectively. Visual results can be found in Fig.~\ref{fig:sd3_compare}} 
  \resizebox{0.9\linewidth}{!}{
  \newcommand{\dt}[1]{{\color{gray}\scriptsize #1}}                                                    
\begin{tabu}{l c cc cc cc cc cc}
\toprule
\multirow{2}{*}{\textbf{Method}} & \multirow{2}{*}{\textbf{NFE}} & \multicolumn{2}{c}{\textbf{CLIPScore}} & \multicolumn{2}{c}{\textbf{Aesthetic}} & \multicolumn{2}{c}{\textbf{PickScore}} &
\multicolumn{2}{c}{\textbf{HPSv2}} & \multicolumn{2}{c}{\textbf{ImageReward}} \\
\cmidrule(lr){3-4} \cmidrule(lr){5-6} \cmidrule(lr){7-8} \cmidrule(lr){9-10} \cmidrule(lr){11-12}
& & Score$\uparrow$ & \dt{$\Delta$($\uparrow$)} & Score$\uparrow$ & \dt{$\Delta$($\uparrow$)} &
Score$\uparrow$ & \dt{$\Delta$($\uparrow$)} & Score$\uparrow$ & \dt{$\Delta$($\uparrow$)} &
Score$\uparrow$ & \dt{$\Delta$($\uparrow$)} \\
\midrule
\texttt{SD3-M} (w/o CFG)~\citep{esser2024scaling}       & 50  & 0.2535 & \dt{$-$0.0401} & 5.2840 &
\dt{$-$0.2871} & 20.5660 & \dt{$-$1.7576} & 0.2010 & \dt{$-$0.0800} & $-$0.3762 & \dt{$-$1.4521} \\  
\texttt{SD3-M} (w/ CFG)~\citep{esser2024scaling}        & 100 & 0.2936 & \dt{---}       & 5.5711 &
\dt{---}       & 22.3236 & \dt{---}       & 0.2810 & \dt{---}       & 1.0759    & \dt{---}       \\  
\midrule
\multicolumn{12}{l}{\cellcolor[gray]{0.92}\textbf{\textit{Few-Step Distillation}}} \\
\midrule
DMD~\citep{dmd}                        & 4   & 0.2861 & \dt{$-$0.0075} & 5.5598 &
\dt{$-$0.0113} & 21.6216 & \dt{$-$0.7020} & 0.2891 & \dt{$+$0.0081}   & 0.9704    & \dt{$-$0.1055} \\
DMD2~\citep{dmd2}                  & 4   & 0.2914 & \dt{$-$0.0022} & 5.7704 &
\dt{$+$0.1993}   & 22.1442 & \dt{$-$0.1794} & 0.2951 & \dt{$+$0.0141}   & 1.1689    & \dt{$+$0.0930}   \\
Flash Diffusion~\citep{chadebec2025flash}               & 4   & 0.2864 & \dt{$-$0.0072} & 5.5236 &
\dt{$-$0.0475} & 22.0558 & \dt{$-$0.2678} & 0.2636 & \dt{$-$0.0174} & 0.8752    & \dt{$-$0.2007} \\
TDM~\citep{tdm}                             & 4   & 0.2848 & \dt{$-$0.0088} & 5.7070 &
\dt{$+$0.1359}   & 22.1806 & \dt{$-$0.1430} & 0.2940 & \dt{$+$0.0130}   & 1.0932    & \dt{$+$0.0173}   \\
\midrule
\multicolumn{12}{l}{\cellcolor[gray]{0.92}\textbf{\textit{Few-Step RL}}} \\
\midrule
Hyper-SD (w/ CFG)~\citep{ren2024hyper}                  & 8   & 0.2827 & \dt{$-$0.0109} & 5.5715 &
\dt{$+$0.0004}   & 21.2576 & \dt{$-$1.0660} & 0.2608 & \dt{$-$0.0202} & 0.6562    & \dt{$-$0.4197} \\
DMDR~\citep{dmdr}            & 4   & 0.2901 & \dt{$-$0.0035} & 5.5123 &
\dt{$-$0.0588} & 21.6528 & \dt{$-$0.6708} & 0.2931 & \dt{$+$0.0121}   & 1.0120    & \dt{$-$0.0639} \\
GDMD~\citep{dong2026guidingdistributionmatchingdistillation}    & 4   & 0.2930 & \dt{$-$0.0006} &
5.8728 & \dt{$+$0.3017} & 22.4614 & \dt{$+$0.1378} & \underline{0.3076} & \dt{$\underline{+0.0266}$} &
1.2702 & \dt{$+$0.1943} \\
$R_\text{dm}$~\citep{fan2026rtextdmreconceptualizingdistributionmatching} & 4 & \underline{0.2936} & \dt{$\underline{+0.0000}$} & \underline{5.8769} & \dt{$\underline{+0.3058}$} & \underline{22.5783} & \dt{$\underline{+0.2547}$} & 0.2957 & \dt{$+$0.0147} & \underline{1.2897} & \dt{$\underline{+0.2138}$} \\
\rowcolor{colSky!30} RTDMD (Ours) & 4 & \textbf{0.3161} & \dt{$\mathbf{+0.0225}$} & \textbf{5.9642} & \dt{$\mathbf{+0.3931}$} & \textbf{22.8593} & \dt{$\mathbf{+0.5357}$} & \textbf{0.3211} & \dt{$\mathbf{+0.0401}$} & \textbf{1.3024} & \dt{$\mathbf{+0.2265}$}\\
\bottomrule
\end{tabu}
}
    \label{tab:sd3}
\end{table}
\subsection{Performance Analysis}\label{sec:performance_analysis}
\noindent\textbf{Comparison with baselines.} We first evaluate our RTDMD on \texttt{SD3-M} with 4-step generation and report results in Tab.~\ref{tab:sd3}. RTDMD achieves the best performance across all five evaluation metrics, establishing a new state of the art for few-step
generation. Specifically, we attain a CLIPScore~\citep{clipscore} of 0.3161, a PickScore~\citep{pickscore} of 22.86, and an HPSv2~\citep{hpsv2} of 0.3211, outperforming the strongest prior methods $R_\text{dm}$~\citep{fan2026rtextdmreconceptualizingdistributionmatching} and GDMD~\citep{dong2026guidingdistributionmatchingdistillation} by a large margin. Notably, these
gains extend beyond training-time rewards to unseen evaluation metrics: our model reaches an Aesthetic Score of 5.9642 and an ImageReward of 1.3024, surpassing the 100-NFE teacher with CFG~\citep{ho2022classifierfreediffusionguidance} by substantial margins (+0.39 and
+0.23, respectively) while using $25\times$ fewer NFE. We additionally validate our framework with the non-differentiable GenEval~\citep{geneval} reward on \texttt{SD3.5-M} (see Tab.~\ref{tab:sd35}), where our approach generalizes effectively.

\begin{table}[!ht]\setlength{\tabcolsep}{2pt}
 \renewcommand{\arraystretch}{1.1}
  \centering
  \caption{Performance on larger-scale diffusion models. We evaluate HPSv3, OCR, and GenEval following \citet{flowgrpo} and \citet{xue2025dancegrpo}, and use the official setting for GenEval2. The remaining metrics are evaluated on
  DrawBench~\citep{schuhmann2022aesthetics} following \citet{zheng2026diffusionnft}.} 
  \resizebox{1.0\linewidth}{!}{
  \begin{tabu}{l|ccccccccc}
\toprule
\makecell{\textbf{Model}} &
\makecell{\textbf{ImageReward}} &
\makecell{\textbf{CLIPScore}} &
\makecell{\textbf{Aesthetic}} &
\makecell{\textbf{PickScore}} &
\makecell{\textbf{HPSv2}} &
\makecell{\textbf{HPSv3}} &
\makecell{\textbf{GenEval}} &
\makecell{\textbf{GenEval2}} &
\makecell{\textbf{OCR}} \\
\midrule

\multicolumn{10}{l}{\cellcolor[gray]{0.92}\textbf{\textit{Diffusion Models}}} \\
\midrule
FLUX.2 4B~\citep{flux-2} & 0.8538 & 0.2834 & 5.3333 & 22.3938 & 0.2771 & 11.7025 & 0.7631 & 0.2207 & 0.6133 \\
FLUX.2 9B~\citep{flux-2} & 1.0021 & \underline{0.2962} & 5.2030 & 22.6382 & 0.2800 & 11.6883 & 0.7568 & 0.3557 & 0.7432 \\
Z-Image 6B~\citep{zimage} & 0.7841 & 0.2841 & 5.2488 & 22.2118 & 0.2714 & 10.0857 & 0.6563 & 0.3012 & 0.7373 \\

\midrule
\multicolumn{10}{l}{\cellcolor[gray]{0.92}\textbf{\textit{Few-Step Diffusion Models (4 NFE)}}} \\
\midrule
Z-Image-Turbo 6B~\citep{zimage} & 0.9696 & 0.2764 & 5.2894 & 22.7994 & 0.2954 & 12.9136 & 0.7562 & 0.3530 & 0.7539 \\
FLUX.2 4B~\citep{flux-2} & 1.0506 & 0.2864 & 5.2658 & 22.7370 & 0.2890 & 12.9295 & 0.7722 & 0.2403 & 0.6375 \\
FLUX.2 9B~\citep{flux-2} & \underline{1.1998} & 0.2919 & \underline{5.3730} & \underline{23.0178} & 0.2991 & 13.2955 & \underline{0.7814} & \underline{0.3570} & \underline{0.7566} \\
Z-Image 6B w/ TDM-R1~\citep{luo2026tdmr1reinforcingfewstepdiffusion} & 1.1543 & 0.2836 & 5.2450 & 22.8202 & \underline{0.3064} & \underline{13.4349} & 0.7737 & \textbf{0.4073} & \textbf{0.7665} \\
\rowcolor{colSky!30}FLUX.2 4B w/ RTDMD (Ours) & \textbf{1.3712} & \textbf{0.3219} & \textbf{5.7746} & \textbf{23.9642} & \textbf{0.3516} & \textbf{15.5772} & \textbf{0.9046} & 0.2755 & 0.6858 \\
\bottomrule
\end{tabu}
}
    \label{tab:flux}
\end{table}

\begin{figure}[!ht]
   \centering
   \setlength{\abovecaptionskip}{0.2cm}
   \begin{minipage}[b]{0.8\linewidth}
       \begin{spacing}{0.7}
       {\tiny Text prompt: \textit{``Lego Arnold Schwarzenegger.''}}
    \vspace{0.05in}         \end{spacing}
       \begin{minipage}[b]{\linewidth}
            \centering
            \begin{subfigure}[tp!]{\linewidth}
            \centering
            \includegraphics[width=\linewidth]{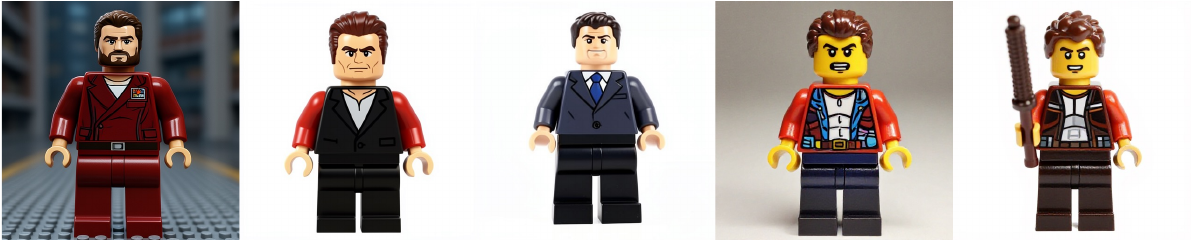}
            \end{subfigure}
       \end{minipage}
       \begin{spacing}{0.7}
       {\tiny Text prompt: \textit{``a green backpack and a pig.''}}
    \vspace{0.05in}         \end{spacing}
       \begin{minipage}[b]{\linewidth}
            \centering
            \begin{subfigure}[tp!]{\linewidth}
            \centering
            \includegraphics[width=\linewidth]{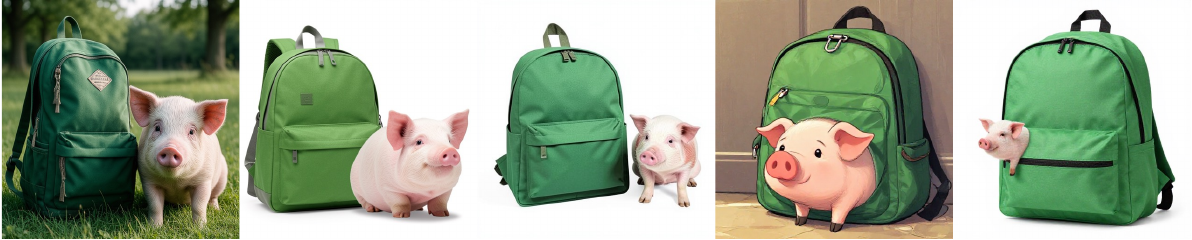}
            \end{subfigure}
       \end{minipage}
       \begin{spacing}{0.7}
       {\tiny Text prompt: \textit{``A vintage postcard with a faded, nostalgic look, featuring elegant cursive text that reads ``Wish You Were Here'' against a backdrop of a serene, old-world waterfront town with pastel buildings and a gentle, sunny sky.''}}
    \vspace{0.05in}         \end{spacing}
       \begin{minipage}[b]{\linewidth}
            \centering
            \begin{subfigure}[tp!]{\linewidth}
            \centering
            \includegraphics[width=\linewidth]{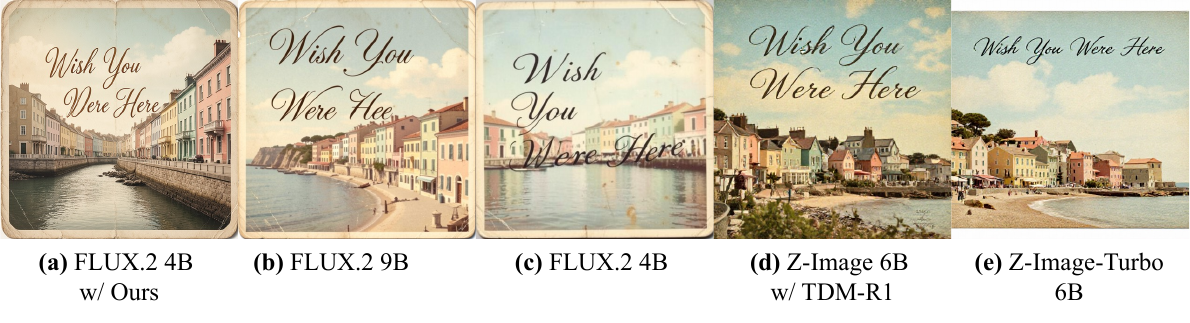}
            \end{subfigure}
       \end{minipage}
   \end{minipage}
   \vspace{-0.05in}
   \caption{Qualitative comparison for few-step diffusion models (4 NFE). Using identical noise inputs, our method outperforms others in both quality and prompt alignment, showing strong performance.}
   \label{fig:flux_compare_main}
\end{figure}

\noindent\textbf{Scaling to more advanced models.} In Tab.~\ref{tab:flux}, we further apply our RTDMD to FLUX.2 4B~\citep{flux-2}, a SOTA transformer-based flow model. Our approach sets new best results in seven out of 9 metrics with substantial absolute gains over the 50-step FLUX.2 4B baseline. Notably, our 4B model surpasses the considerably larger FLUX.2 9B (50-step) on the
majority of metrics, demonstrating that RL-guided distillation can effectively close the quality gap introduced by model scale reduction. While Z-Image 6B~\citep{zimage} with TDM-R1~\citep{luo2026tdmr1reinforcingfewstepdiffusion} achieves higher absolute scores on GenEval2~\citep{kamath2025geneval2addressingbenchmark} and OCR~\citep{flowgrpo}, this advantage stems
primarily from the Z-Image base model itself being inherently stronger on these two benchmarks; in contrast, the relative improvement brought by
our method over its own baseline is more pronounced on OCR (+0.0483 \emph{vs.} +0.0126) and comparable on GenEval2. Qualitative comparisons in Fig.~\ref{fig:flux_compare_main} and Fig.~\ref{fig:flux_compare} further corroborate these findings.

\subsection{Ablation Studies}\label{sec:ablation}
In this subsection, we perform ablation using \texttt{SD3.5-M}~\citep{sd35} with CLIPScore~\citep{clipscore}, PickScore~\citep{pickscore}, and HPSv2~\citep{hpsv2} as training rewards on DrawBench~\citep{schuhmann2022aesthetics} following ~\citet{zheng2026diffusionnft}. Other settings are the same as Sec.~\ref{sec:implementation-details}. Visualization results for ablation are also presented in App.~\ref{app:vis_ablation}.

\begin{table}[!ht]\setlength{\tabcolsep}{1pt}
  \centering
  \begin{minipage}[b]{0.495\linewidth}
  \renewcommand{\arraystretch}{1.1}
    \caption{Ablation results for different distribution matching strategies in our two-stage training. All methods adopt the CPS scheduler~\citep{wang2025coefficients} with $\eta=0.9$. \txtBlue{Blue row} denotes the default setting of our RTDMD. Results of DMD2~\citep{dmd2} and A-DMD with various values of $\eta$ are shown in App.~\ref{app:discussion_sampling}.}
      \resizebox{\linewidth}{!}{
      \newcommand{\dt}[1]{{\color{gray}\scriptsize #1}}
\begin{tabu}{l cc cc cc}
\toprule
\multirow{2}{*}{\textbf{Method}} & \multicolumn{2}{c}{\textbf{CLIPScore}} & \multicolumn{2}{c}{\textbf{PickScore}} &
\multicolumn{2}{c}{\textbf{HPSv2}} \\
\cmidrule(lr){2-3} \cmidrule(lr){4-5} \cmidrule(lr){6-7}
& Score$\uparrow$ & \dt{$\Delta$($\uparrow$)} &
Score$\uparrow$ & \dt{$\Delta$($\uparrow$)} & Score$\uparrow$ & \dt{$\Delta$($\uparrow$)} \\
\midrule
A-DMD & 0.2753 & \dt{---} & 22.0116 & \dt{---} & 0.3188 & \dt{---} \\
AC-DMD ($\gamma\!=\!0.1$) & \textbf{0.2846} & \dt{$\mathbf{+0.0093}$} & \underline{23.2105} & \dt{$\underline{+1.1989}$} & \underline{0.3362} & \dt{$\underline{+0.0174}$} \\
\rowcolor{colSky!30}AC-DMD ($\gamma\!=\!0.01$) & \underline{0.2843} & \dt{$\underline{+0.0090}$} & \textbf{23.5690} & \dt{$\mathbf{+1.5574}$} & \textbf{0.3456} & \dt{$\mathbf{+0.0268}$} \\
AC-DMD ($\gamma\!=\!0.005$) & 0.2837 & \dt{$+$0.0084} & 22.8546 & \dt{$+$0.8430} & 0.3324 & \dt{$+$0.0136} \\
AC-DMD ($\gamma\!=\!0.001$) & 0.2831 & \dt{$+$0.0078} & 22.2378 & \dt{$+$0.2262} & 0.3294 & \dt{$+$0.0106} \\
\bottomrule
\end{tabu}
}
        \label{tab:ac-dmd}
    \end{minipage}
    \begin{minipage}[b]{0.495\linewidth}
    \renewcommand{\arraystretch}{1.2}
    \caption{Ablation results for strategies applied after cold start. ``w/ $\mathcal{L}_{\mathrm{det}}$'' means we adopt Eq.~\eqref{eq:det_grad} to optimize the generator. ``w/ GRPO'' signifies that we use naive GRPO to optimize the first $K-1$ stochastic steps. \txtBlue{Blue row} denotes the default setting of our RTDMD.}
      \resizebox{\linewidth}{!}{
      \newcommand{\dt}[1]{{\color{gray}\scriptsize #1}}
\begin{tabu}{l cc cc cc}
\toprule
\multirow{2}{*}{\textbf{Method}} & \multicolumn{2}{c}{\textbf{CLIPScore}} & \multicolumn{2}{c}{\textbf{PickScore}} &
\multicolumn{2}{c}{\textbf{HPSv2}} \\
\cmidrule(lr){2-3} \cmidrule(lr){4-5} \cmidrule(lr){6-7}
& Score$\uparrow$ & \dt{$\Delta$($\uparrow$)} &
Score$\uparrow$ & \dt{$\Delta$($\uparrow$)} & Score$\uparrow$ & \dt{$\Delta$($\uparrow$)} \\
\midrule
Cold Start & 0.2822 & \dt{$+$0.0063} & 22.5056 & \dt{$-$0.8008} & 0.2942 & \dt{$-$0.0326} \\
$\quad \text{w/}$ GRPO & 0.2759 & \dt{---} & 23.3064 & \dt{---} & 0.3268 & \dt{---} \\
$\quad \text{w/}$ SubGRPO ($M\!=\!1$) & 0.2798 & \dt{$+$0.0039} & 23.3272 & \dt{$+$0.0208} & 0.3332 & \dt{$+$0.0064} \\
$\quad\quad\text{w/} \mathcal{L}_{\mathrm{det}}$ & 0.2802 & \dt{$+$0.0043} & 23.4156 & \dt{$+$0.1092} & 0.3318 & \dt{$+$0.0050} \\
$\quad \text{w/}$ SubGRPO ($M\!=\!2$) & \underline{0.2839} & \dt{$\underline{+0.0080}$} & \underline{23.4520} & \dt{$\underline{+0.1456}$} & \textbf{0.3459} & \dt{$\mathbf{+0.0191}$} \\
\rowcolor{colSky!30}$\quad\quad \text{w/} \mathcal{L}_{\mathrm{det}}$  & \textbf{0.2843} & \dt{$\mathbf{+0.0084}$} & \textbf{23.5690} & \dt{$\mathbf{+0.2626}$} & \underline{0.3456} & \dt{$\underline{+0.0188}$} \\
\bottomrule
\end{tabu}
}
        \label{tab:hybrid-policy-gradient}
    \end{minipage}\hfill
\end{table}
\begin{figure}[!ht]
   \centering
     \includegraphics[width=\textwidth]{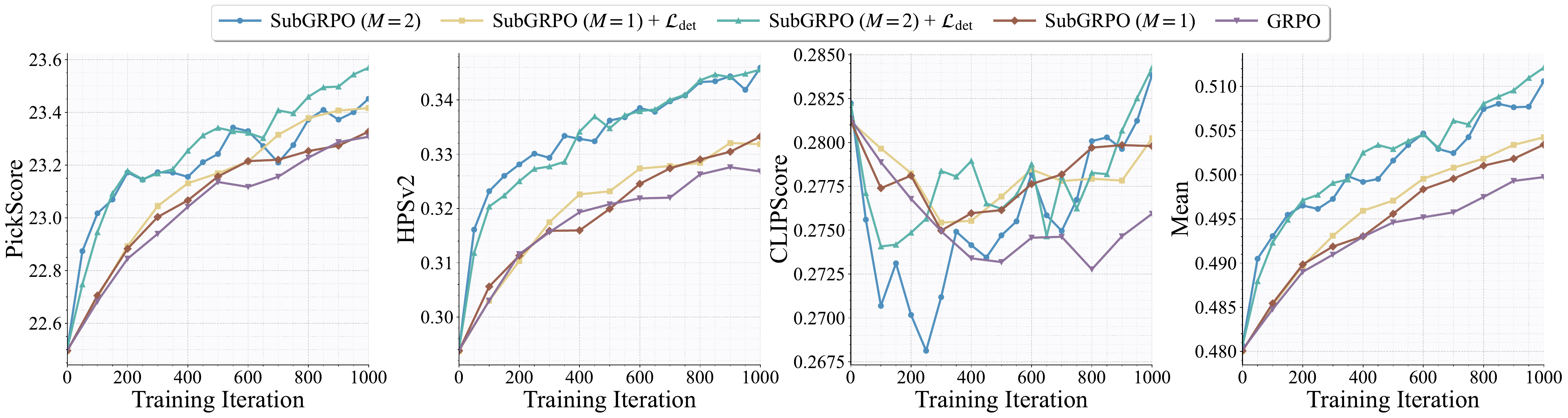}
     \vspace{-0.2in}
     \caption{Evaluation curves when reinforcing the few-step generator. ``Mean'' denotes the average score of  normalized PickScore, HPSv2, and CLIPScore.}
    \label{fig:reward_curve_ablation}
\end{figure}

\noindent\textbf{Effect of AC-DMD.} In A-DMD, the fake score model is trained on noisy intermediate latents rather than clean samples, making denoising score matching less accurate under limited updates. As shown in
Tab.~\ref{tab:ac-dmd}, the consistency loss
$\mathcal{L}_{\mathrm{cons}}^{(k)}$ improves fake score estimation by enforcing coherent predictions across neighboring timesteps. Notably, the improvement is consistent across all tested values of $\gamma$, confirming the robustness of the regularizer. The best result at $\gamma\!=\!0.01$ yields an additional +1.58 PickScore
and +0.027 HPSv2 over A-DMD.

\noindent\textbf{Effect of SubGRPO.}
As shown in Tab.~\ref{tab:hybrid-policy-gradient} and Fig.~\ref{fig:reward_curve_ablation}, replacing naive GRPO with SubGRPO consistently improves all metrics. Naive GRPO samples independent noise at every stochastic step, so reward differences across trajectories conflate
contributions from all steps, leading to a noisy gradient signal. SubGRPO instead shares the injected noise at non-selected steps ($k\notin\mathcal{S}$) across the group (Eq.~\eqref{eq:shared-noise}), attributing reward variation primarily to
the selected subset $\mathcal{S}$. This shared-noise design reduces gradient variance by isolating the effect of selected steps. Increasing the subset size from $M\!=\!1$ to $M\!=\!2$ brings further gains, with PickScore improving from 23.33 to 23.45 and HPSv2 from 0.3332 to 0.3459. This reflects a trade-off between per-step variance reduction (fewer selected steps yield tighter control) and optimization coverage (more selected steps provide gradient signal to a larger portion of the trajectory); in our setting, $M\!=\!2$ strikes a favorable balance.

\noindent\textbf{Effect of hybrid policy gradient.}
Existing GRPO-based methods for diffusion models~\citep{flowgrpo,xue2025dancegrpo} only optimize the stochastic denoising steps via the GRPO-estimator, neglecting the final deterministic transition entirely. However, our hybrid policy gradient (Eq.~\eqref{eq:hybrid-policy-gradient}) combines SubGRPO for
stochastic intermediate steps with a pathwise gradient through this deterministic final step. Tab.~\ref{tab:hybrid-policy-gradient} shows that adding this term consistently improves performance: under $M\!=\!2$, PickScore increases from 23.45 to 23.57 and CLIPScore from 0.2839 to 0.2843, confirming that the deterministic final mapping carries a meaningful optimization signal that is otherwise left unexploited.

\section{Conclusions and Limitations}\label{sec:conclusion_and_limitations}
We present RTDMD, a two-stage framework for few-step image generation that unifies distribution matching distillation with reward-guided reinforcement learning. The first stage,
AC-DMD employs ambient distribution matching and augments fake score training with a consistency regularizer to stabilize estimation. The second stage minimizes the KL divergence to a reward-tilted teacher distribution, which naturally decomposes into a distribution matching term and a reward maximization term. For the latter, we derive a hybrid policy gradient that combines SubGRPO for stochastic intermediate steps with direct reward backpropagation through the deterministic final step. Comprehensive experiments demonstrate the SOTA performance of our method across baselines. In terms of limitations, this work focuses on text-to-image generation; extending the framework to video synthesis and image editing, where reward design and temporal consistency pose additional challenges, is left for future work.

\bibliography{citations}

\clearpage

\appendix

\clearpage

\section{Related Work}\label{sec:related_work}

\noindent\textbf{Few-step distillation.} To alleviate the iterative sampling cost of diffusion and flow-based models~\citep{ddpm,ldm,sd3,flow_matching}, a growing line of work distills multi-step teachers into few-step students, broadly falling into \emph{trajectory-based} and
\emph{distribution-based} approaches. The former, including progressive distillation~\citep{salimans2022progressive,ren2024hyper,lin2024sdxl,chadebec2025flash} and consistency
models (CM)~\citep{song2023consistency,lcm,kimctm,wang2024phased}, replicate or enforce self-consistency along the teacher's trajectory, while the latter align output distributions via
GAN-based~\citep{Wang2023DiffusionGAN,sauer2024adversarial,sauer2024fast} or score/VSD variants~\citep{wang2023prolificdreamer,Luo2023diffinstruct}. Among these, Distribution Matching Distillation (DMD)~\citep{dmd,dmd2} has become a foundation
work for high-fidelity few-step generation, with follow-ups refining it from different angles: Flash-DMD~\citep{flash-dmd} uses a timestep-aware pixel-GAN; TDM~\citep{tdm} fuses trajectory and distribution matching without GAN; Decoupled
DMD~\citep{decoupled-dmd} recasts classifier-free guidance (CFG)~\citep{ho2022classifierfreediffusionguidance} as the generative driver and DMD as a regularizer; and PhasedDMD~\citep{fan2026phaseddmdfewstepdistribution} combines phase-wise distillation
with Mixture-of-Experts (MoE)~\citep{jiang2024mixtralexperts} to ease learning.

\noindent\textbf{Reinforcement learning for diffusion models.} Reinforcement learning (RL) has emerged as a powerful tool for aligning diffusion and flow-based models with human preferences. One line directly backpropagates reward gradients, as in ReFL~\citep{xu2023imagereward} on one-step predictions and DRaFT~\citep{clark2023directly} on multi-step samples via truncated backpropagation. Another line formulates denoising as a multi-step decision process: DDPO~\citep{ddpo} and DPOK~\citep{fan2023dpok} derive tractable Gaussian likelihoods via an Euler–Maruyama discretization, enabling PPO-style~\citep{schulman2017proximal} optimization. Flow-GRPO~\citep{flowgrpo} and DanceGRPO~\citep{xue2025dancegrpo} further combine this formulation with GRPO~\citep{deepseekmath}, thereby simplifying PPO with a group-mean baseline. A third line optimizes the forward process: \citet{lee2023aligning} and \citet{fan2025online} use
reward-weighted denoising losses, Diffusion-DPO~\citep{wallace2024diffusion} adapts DPO~\citep{rafailov2024directpreferenceoptimizationlanguage} without rollouts, and FMPG~\citep{mcallister2025flow} and AWM~\citep{xue2025advantage} use the ELBO as a likelihood proxy, while
DiffusionNFT~\citep{zheng2026diffusionnft} reinterprets GRPO as an NFT-style~\citep{chen2026nftbridgingsupervisedlearning} forward-process variant.

\textbf{Reinforcement learning for few-step generative models.} Beyond applying RL to multi-step teachers, a growing body of work integrates RL with few-step distillation to combine efficiency with preference alignment. Early efforts treat RL as an independent post-training stage on top of distilled models: Hyper-SD~\citep{ren2024hyper} augments trajectory-segmented consistency distillation with human feedback learning, while PSO~\citep{miao2025tuningtimestepdistilleddiffusionmodel} fine-tunes timestep-distilled models with pairwise preference data to avoid costly re-distillation.
Diff-Instruct++~\citep{luo2025diffinstructtrainingonesteptexttoimage} further reveals a theoretical link between CFG-based distillation and RL, formulating one-step generator alignment as KL-regularized reward maximization. More recent work~\citep{luo2026tdmr1reinforcingfewstepdiffusion, dmdr}
moves toward joint optimization: DMDR~\citep{dmdr} unifies DMD and RL so that the two mutually regularize each other, and $R_{\text{dm}}$~\citep{fan2026rtextdmreconceptualizingdistributionmatching} re-conceptualizes distribution matching
itself as a reward, providing a principled bridge between DMD and GRPO-style RL. Besides these two works, GDMD~\citep{dong2026guidingdistributionmatchingdistillation} first combines NFT-style RL~\citep{zheng2026diffusionnft} with DMD.

\section{Derivation of the Step-\texorpdfstring{$k$}{k} Training Objective}
\label{app:k_step_objective}

\noindent\textbf{Learning the generator after $k$ sampling steps.}
Let $p_\theta^{(k)}$ denote the distribution of the generated state $\hat{\bx}_{t_k}$ after the first $k$ sampling steps.
For training step $k$, we define the student marginal at time $t \in [t_k,1]$ by diffusing $\hat{\bx}_{t_k}$ forward within the remaining interval.

Under the rectified schedule, the diffusion path is
\[
\bx_t = (1-t)\bx_0 + t\bepsilon,
\qquad
\bepsilon \sim \mathcal{N}(\mathbf{0},\mathbf{I}).
\]
By the Markov property of the Gaussian path, conditioning on an intermediate state $\bx_{t_k}$ yields
\[
\bx_t
=
\frac{1-t}{1-t_k}\bx_{t_k}
+
\frac{t-t_k}{1-t_k}\bepsilon,
\qquad t\in[t_k,1].
\]
Therefore, for generated samples $\hat{\bx}_{t_k}\sim p_\theta^{(k)}$, we define
\[
\bx_t^{(k)}
=
\alpha_k(t)\hat{\bx}_{t_k}+\sigma_k(t)\bepsilon,
\qquad
\alpha_k(t)=\frac{1-t}{1-t_k},
\qquad
\sigma_k(t)=\frac{t-t_k}{1-t_k},
\]
with conditional density
\[
q_k(\bx_t \mid \hat{\bx}_{t_k}, t)
=
\mathcal{N}\!\left(
\bx_t;\,
\alpha_k(t)\hat{\bx}_{t_k},\,
\sigma_k(t)^2\mathbf{I}
\right).
\]
The induced student marginal is
\[
p_{\theta,t}^{(k)}(\bx)
=
\int
q_k(\bx \mid \hat{\bx}_{t_k}, t)\,
p_\theta^{(k)}(\hat{\bx}_{t_k})\,
d\hat{\bx}_{t_k}.
\]

Following the reverse-KL formulation used in distribution matching distillation, we optimize
\[
\mathcal{L}_{\mathrm{gen}}^{(k)}(\theta)
=
\mathbb{E}_{t \sim \mathcal{U}[t_k,1]}
\left[
\lambda_k(t)\,
\mathrm{KL}\!\left(
p_{\theta,t}^{(k)} \,\|\, p_{\psi,t}
\right)
\right].
\]
Using the pathwise derivative of the reverse KL gives
\[
\nabla_\theta \mathcal{L}_{\mathrm{gen}}^{(k)}
=
\mathbb{E}_{t,\bepsilon,\hat{\bx}_{t_k}}
\left[
\lambda_k(t)\,
\left(
\nabla_{\bx}\log p_{\theta,t}^{(k)}(\bx_t^{(k)})
-
s_\psi(\bx_t^{(k)},t)
\right)^\top
\frac{\partial \bx_t^{(k)}}{\partial \theta}
\right].
\]
Since
\[
\bx_t^{(k)}=\alpha_k(t)\hat{\bx}_{t_k}+\sigma_k(t)\bepsilon,
\]
we have
\[
\frac{\partial \bx_t^{(k)}}{\partial \theta}
=
\alpha_k(t)\frac{\partial \hat{\bx}_{t_k}}{\partial \theta},
\]
and hence
\[
\nabla_\theta \mathcal{L}_{\mathrm{gen}}^{(k)}
=
\mathbb{E}_{t,\bepsilon,\hat{\bx}_{t_k}}
\left[
\lambda_k(t)\alpha_k(t)\,
\left(
\nabla_{\bx}\log p_{\theta,t}^{(k)}(\bx_t^{(k)})
-
s_\psi(\bx_t^{(k)},t)
\right)^\top
\frac{\partial \hat{\bx}_{t_k}}{\partial \theta}
\right].
\]

Because the score of $p_{\theta,t}^{(k)}$ is intractable, we approximate it with a learned fake score
\[
\nabla_{\bx}\log p_{\theta,t}^{(k)}(\bx)
\approx
s_\phi(\bx,t),
\]
which yields the practical score-difference gradient
\[
\nabla_\theta \mathcal{L}_{\mathrm{gen}}^{(k)}
\approx
-
\mathbb{E}_{t,\bepsilon,\hat{\bx}_{t_k}}
\left[
\lambda_k(t)\alpha_k(t)\,
\left(
s_\psi(\bx_t^{(k)},t)
-
s_\phi(\bx_t^{(k)},t)
\right)^\top
\frac{\partial \hat{\bx}_{t_k}}{\partial \theta}
\right].
\]

\noindent\textbf{Learning the fake score within $[t_k,1]$.}
Since clean samples $\bx_0$ are unavailable when $t_k>0$, the fake score must be trained using the generated endpoint $\hat{\bx}_{t_k}$ rather than a global clean target. Restating Eq.~\eqref{eq:ambient_score_matching}, the fake-score model $s_\phi$ is fit on the subinterval $[t_k,1]$ via the DSM objective
\[
\mathcal{L}_{\mathrm{fake}}^{(k)}(\phi)
=
\mathbb{E}_{t\sim\mathcal{U}[t_k,1],\,\hat{\bx}_{t_k},\,\bepsilon}
\left[
\omega_k(t)\,
\left\|
s_\phi(\bx_t^{(k)}, t)
-
\nabla_{\bx}\log q_t^{(k)}(\bx_t^{(k)} \mid \hat{\bx}_{t_k})
\right\|_2^2
\right],
\]
which is unbiased for the marginal score $\nabla_\bx\log p_{\theta,t}^{(k)}$ (App.~\ref{app:dsm_optimality}); reusing the full-interval objective with a mismatched conditioning would in general yield a biased estimator.

In practice, we parametrize $s_\phi$ via an $x$-prediction model $f_\phi$ that predicts $\hat{\bx}_{t_k}$. Under the Gaussian subinterval kernel
\[
q_t^{(k)}(\bx \mid \hat{\bx}_{t_k})
=
\mathcal{N}\!\left(
\bx;\,
\alpha_k(t)\hat{\bx}_{t_k},\,
\sigma_k(t)^2\mathbf{I}
\right),
\]
the score estimator follows from Tweedie's formula:
\[
s_\phi(\bx_t^{(k)},t)
=
-
\frac{
\bx_t^{(k)}-\alpha_k(t)\,f_\phi(\bx_t^{(k)},t)
}{
\sigma_k(t)^2
}.
\]

\section{Coefficient-Preserving Sampling Formula}\label{app:cps_formula}

Under the rectified-flow modeling, $\bx_t = (1-t)\bx_0 + t\bepsilon$ with $\bepsilon\sim\mathcal{N}(\mathbf{0},\mathbf{I})$, so $\alpha_t = 1-t$ and $\sigma_t = t$.
Given the current latent $\hat{\bx}_{t_{k-1}}$ at noise level $t_{k-1}$, the generator produces an $x$-prediction $\hat{\bx}_{\mathrm{pred}}^{(k)} = G_\theta(\hat{\bx}_{t_{k-1}}, t_{k-1})$ and a corresponding noise estimate
\begin{equation}
\hat{\bepsilon}^{(k)} = \frac{\hat{\bx}_{t_{k-1}} - (1-t_{k-1})\hat{\bx}_{\mathrm{pred}}^{(k)}}{t_{k-1}}.
\label{eq:noise_estimate}
\end{equation}

coefficient-preserving sampling (CPS)~\citep{wang2025coefficients} constructs the next latent $\hat{\bx}_{t_k}$ by enforcing that the signal and noise coefficients exactly match the scheduler at time $t_k$:
\begin{equation}
\hat{\bx}_{t_k}
=
(1-t_k)\,\hat{\bx}_{\mathrm{pred}}^{(k)}
+
t_k\cos\!\Bigl(\frac{\eta\pi}{2}\Bigr)\hat{\bepsilon}^{(k)}
+
t_k\sin\!\Bigl(\frac{\eta\pi}{2}\Bigr)\bepsilon_k,
\label{eq:cps_basic}
\end{equation}
where $\bepsilon_k\sim\mathcal{N}(\mathbf{0},\mathbf{I})$ is fresh noise and $\eta\in[0,1]$ controls stochasticity. Setting $\eta=0$ recovers the deterministic Euler ODE scheduler $\hat{\bx}_{t_k}=(1-t_k)\hat{\bx}_{\mathrm{pred}}^{(k)}+t_k\hat{\bepsilon}^{(k)}$, while $\eta=1$ fully replaces the old noise with fresh samples, recovering the consistency model (CM) scheduler~\citep{song2023consistency}. CPS thus provides a unified formulation that subsumes both common DMD~\citep{dmd2, dmd} sampling (\emph{i.e.}, Euler and CM) strategies as special cases.

Substituting $\hat{\bepsilon}^{(k)}$ from Eq.~\eqref{eq:noise_estimate} into Eq.~\eqref{eq:cps_basic} and collecting terms gives
\begin{align}
\hat{\bx}_{t_k}
&=
(1-t_k)\,\hat{\bx}_{\mathrm{pred}}^{(k)}
+
\frac{t_k}{t_{k-1}}\cos\!\Bigl(\frac{\eta\pi}{2}\Bigr)
\bigl[\hat{\bx}_{t_{k-1}}-(1-t_{k-1})\hat{\bx}_{\mathrm{pred}}^{(k)}\bigr]
+
t_k\sin\!\Bigl(\frac{\eta\pi}{2}\Bigr)\bepsilon_k \nonumber\\[4pt]
&=
\Bigl[
1-t_k+t_k\cos\!\Bigl(\frac{\eta\pi}{2}\Bigr)\Bigl(1-\frac{1}{t_{k-1}}\Bigr)
\Bigr]\hat{\bx}_{\mathrm{pred}}^{(k)}
+
\frac{t_k}{t_{k-1}}\cos\!\Bigl(\frac{\eta\pi}{2}\Bigr)\hat{\bx}_{t_{k-1}}
+
t_k\sin\!\Bigl(\frac{\eta\pi}{2}\Bigr)\bepsilon_k.
\label{eq:cps}
\end{align}
The transition defines a Gaussian policy with mean $\bmu_\theta^{(k)} = \hat{\bx}_{t_k}\big|_{\bepsilon_k=\mathbf{0}}$ and standard deviation $\sigma_k = t_k\sin(\eta\pi/2)$.

\section{Optimality of Ambient Denoising Score Matching}\label{app:dsm_optimality}
We show that the denoising score matching (DSM) objective in Eq.~\eqref{eq:ambient_score_matching} has the true student marginal score $\nabla_{\bx}\log p_{\theta,t}^{(k)}(\bx)$ as its unique optimal solution.
\begin{proposition}\label{prop:dsm_optimal}
Let $p_{\theta,t}^{(k)}(\bx) = \int q_t^{(k)}(\bx\mid \hat{\bx}_{t_k})\,p_\theta^{(k)}(\hat{\bx}_{t_k})\,d\hat{\bx}_{t_k}$ denote the student marginal. Then, over functions $s:\mathbb{R}^d\times[t_k,1]\to\mathbb{R}^d$,
\[
\argmin_{s}\;\mathbb{E}_{t\sim\mathcal{U}[t_k,1]}\,\mathbb{E}_{\hat{\bx}_{t_k}\sim p_\theta^{(k)},\,\bx\sim q_t^{(k)}(\cdot\mid\hat{\bx}_{t_k})}\left[\omega_k(t)\,\left\|s(\bx,t)-\nabla_{\bx}\log
q_t^{(k)}(\bx\mid\hat{\bx}_{t_k})\right\|^2\right]
\]
is achieved at $s^*(\bx,t) = \nabla_{\bx}\log p_{\theta,t}^{(k)}(\bx)$ for almost every $t\in[t_k,1]$.
\end{proposition}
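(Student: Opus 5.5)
The plan is the standard Vincent-style argument. The objective splits across noise levels, and at each level it is a least-squares regression whose minimizer is a conditional expectation. That conditional expectation of the kernel score turns out to equal the marginal score.

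\textbf{Step 1: reduce to a fixed $t$.} Because $s$ is an arbitrary function of $(\bx,t)$ and $\omega_k(t)>0$, the objective is an integral over $t\in[t_k,1]$ of nonnegative per-$t$ functionals $J_t(s(\cdot,t))$. Its minimizers are exactly the functions that minimize $J_t$ for almost every $t$. We restrict to $t\in(t_k,1]$, so that $\sigma_k(t)>0$ and the Gaussian kernel $q_t^{(k)}$ is nondegenerate; the endpoint $t=t_k$ has measure zero and can be dropped.

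\textbf{Step 2: pointwise regression for fixed $t$.} Write the joint law of $(\hat{\bx}_{t_k},\bx)$ as $p_\theta^{(k)}(\hat{\bx}_{t_k})\,q_t^{(k)}(\bx\mid\hat{\bx}_{t_k})$. Expand the square and condition on $\bx$:
\[
J_t(s)=\mathbb{E}_{\bx\sim p_{\theta,t}^{(k)}}\bigl\|s(\bx,t)-g(\bx)\bigr\|^2+C_t,
\qquad g(\bx)=\mathbb{E}\bigl[\nabla_{\bx}\log q_t^{(k)}(\bx\mid\hat{\bx}_{t_k})\,\big|\,\bx\bigr],
\]
where $C_t$ does not depend on $s$. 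This bias--variance identity shows that the minimizer is $s=g$, unique $p_{\theta,t}^{(k)}$-a.e. Since $p_{\theta,t}^{(k)}$ is a Gaussian convolution, it is strictly positive everywhere, so uniqueness holds Lebesgue-a.e.

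\textbf{Step 3: identify $g$ with the marginal score.} By Bayes' rule,
\[
g(\bx)=\frac{\int \nabla_{\bx} q_t^{(k)}(\bx\mid\hat{\bx}_{t_k})\,p_\theta^{(k)}(\hat{\bx}_{t_k})\,d\hat{\bx}_{t_k}}{p_{\theta,t}^{(k)}(\bx)}.
\]
Exchanging the gradient and the integral, the numerator becomes $\nabla_{\bx}p_{\theta,t}^{(k)}(\bx)$, which gives $g=\nabla_{\bx}\log p_{\theta,t}^{(k)}$.

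The main obstacle is this exchange of differentiation and integration, together with the integrability needed in Step 2. I would handle both by dominated convergence. For $\sigma_k(t)>0$, both $\nabla_{\bx}q_t^{(k)}$ and $\|\nabla_{\bx}\log q_t^{(k)}\|^2 q_t^{(k)}$ are bounded uniformly in $\hat{\bx}_{t_k}$ on compact sets of $\bx$ by a constant times a Gaussian. These bounds are integrable against any probability measure $p_\theta^{(k)}$.

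For finiteness of $C_t$, I would assume $p_\theta^{(k)}$ has finite second moment, so that $\mathbb{E}\|\nabla\log q_t^{(k)}\|^2=d/\sigma_k(t)^2<\infty$. I would also restrict the candidate class to $s$ with finite objective. Finally, I would note that the integral over $t$ requires $\int\omega_k(t)\,d/\sigma_k(t)^2\,dt<\infty$, or else the argument should be read per-$t$, as the ``for almost every $t$'' phrasing permits.
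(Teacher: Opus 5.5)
Your proposal is correct and follows the paper's proof essentially step for step. Both decouple the objective over $t$, identify the per-$t$ minimizer as the posterior mean of the kernel score, and use Bayes' rule with differentiation under the integral to show that this mean equals $\nabla_{\bx}\log p_{\theta,t}^{(k)}$. Your added justifications go beyond the paper's rigor: dominated convergence for the exchange, Lebesgue-a.e.\ uniqueness from positivity of the Gaussian convolution, and the caveat that the weight $\omega_k$ must tame the $1/\sigma_k(t)^2$ blow-up near $t=t_k$, since otherwise the $t$-integrated objective can be infinite for every $s$. One small point: the finite-second-moment assumption is unnecessary, because $\mathbb{E}\|\nabla_{\bx}\log q_t^{(k)}\|^2=d/\sigma_k(t)^2$ holds for any $p_\theta^{(k)}$.
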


\begin{proof}
Since $s$ is a function of both $(\bx,t)$, the outer expectation over $t$ decouples and the minimization can be carried out independently for each $t$. Indeed, the full objective can be written as
\[
\frac{1}{1-t_k}\int_{t_k}^1 \omega_k(t)\,\mathcal{J}_t\bigl(s(\cdot,t)\bigr)\, dt,
\qquad
\mathcal{J}_t(f) := \mathbb{E}_{\hat{\bx}_{t_k},\,\bx}\left[\|f(\bx) - \nabla_{\bx}\log q_t^{(k)}(\bx\mid\hat{\bx}_{t_k})\|^2\right].
\]
For each fixed $t\in[t_k,1]$, $\omega_k(t)>0$ is a strictly positive constant and does not affect the inner argmin, so it suffices to minimize $\mathcal{J}_t(f)$ over $f:\mathbb{R}^d\to\mathbb{R}^d$.

Since the joint density of $(\bx, \hat{\bx}_{t_k})$ is $p_\theta^{(k)}(\hat{\bx}_{t_k})\,q_t^{(k)}(\bx\mid\hat{\bx}_{t_k})$, we can write
\[
\mathcal{J}_t(f) = \int p_{\theta,t}^{(k)}(\bx)\,\mathbb{E}_{\hat{\bx}_{t_k}\sim p(\hat{\bx}_{t_k}\mid\bx)}\left[\|f(\bx) - \nabla_{\bx}\log
q_t^{(k)}(\bx\mid\hat{\bx}_{t_k})\|^2\right]d\bx,
\]
where $p(\hat{\bx}_{t_k}\mid\bx) = \frac{q_t^{(k)}(\bx\mid\hat{\bx}_{t_k})\,p_\theta^{(k)}(\hat{\bx}_{t_k})}{p_{\theta,t}^{(k)}(\bx)}$ is the posterior. For each $\bx$, the inner
expectation is minimized pointwise by choosing $f(\bx)$ equal to the conditional mean:
\[
f^*(\bx) = \mathbb{E}_{\hat{\bx}_{t_k}\sim p(\hat{\bx}_{t_k}\mid\bx)}\left[\nabla_{\bx}\log q_t^{(k)}(\bx\mid\hat{\bx}_{t_k})\right].
\]
It remains to show that this equals the marginal score. By Bayes' rule,
\begin{align}
\nabla_{\bx}\log p_{\theta,t}^{(k)}(\bx)
&= \nabla_{\bx}\log\int q_t^{(k)}(\bx\mid\hat{\bx}_{t_k})\,p_\theta^{(k)}(\hat{\bx}_{t_k})\,d\hat{\bx}_{t_k} \nonumber\\
&= \frac{\int \nabla_{\bx}\,q_t^{(k)}(\bx\mid\hat{\bx}_{t_k})\,p_\theta^{(k)}(\hat{\bx}_{t_k})\,d\hat{\bx}_{t_k}}{p_{\theta,t}^{(k)}(\bx)} \nonumber\\
&= \int \frac{q_t^{(k)}(\bx\mid\hat{\bx}_{t_k})\,p_\theta^{(k)}(\hat{\bx}_{t_k})}{p_{\theta,t}^{(k)}(\bx)}\,\nabla_{\bx}\log q_t^{(k)}(\bx\mid\hat{\bx}_{t_k})\,d\hat{\bx}_{t_k}
\nonumber\\
&= \mathbb{E}_{\hat{\bx}_{t_k}\sim p(\hat{\bx}_{t_k}\mid\bx)}\left[\nabla_{\bx}\log q_t^{(k)}(\bx\mid\hat{\bx}_{t_k})\right] = f^*(\bx). \nonumber
\end{align}
Hence the pointwise optimum $s^*(\bx,t) = \nabla_{\bx}\log p_{\theta,t}^{(k)}(\bx)$ holds for almost every $t\in[t_k,1]$.
\end{proof}

\section{Self-Consistency of the Optimal Fake Score}\label{app:consistency_proof}                                                                                                      

We prove that the optimal fake score model satisfies the self-consistency property invoked in Sec.~\ref{sec:ac-dm}.                                                                    
                                                        
\begin{proposition}[Self-consistency of optimal denoiser]\label{prop:consistency}
Let $\hat{\bx}_\phi(\bx_t, t)$ denote a score model in $x$-prediction form trained under the flow matching framework. If $\hat{\bx}_\phi$ is optimal, i.e.,
\[
\hat{\bx}_\phi(\bx_t, t) = \mathbb{E}[\bx_0 \mid \bx_t], \qquad \forall\, t,
\]
then for any $t'' < t'$:
\[
\hat{\bx}_\phi(\bx_{t'}, t') = \mathbb{E}_{\tilde{\bx}_{t''} \sim p(\bx_{t''} \mid \bx_{t'})}\bigl[\hat{\bx}_\phi(\tilde{\bx}_{t''}, t'')\bigr].
\]
\end{proposition}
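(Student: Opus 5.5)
The plan is to prove this with the tower property of conditional expectation together with the Markov structure of the forward Gaussian path. The reverse kernel $p(\bx_{t''}\mid\bx_{t'})$ is the true reverse transition of the diffusion path, which is what the optimal fake model induces.

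First, I will pin down the joint law. Draw $\bx_0\sim p_0$ (in the ambient setting, $\hat{\bx}_{t_k}$ plays this role). Generate $\bx_{t''}$ from $q_{t''}(\cdot\mid\bx_0)$, and then generate $\bx_{t'}$ from $\bx_{t''}$ by the forward Gaussian kernel used in App.~\ref{app:k_step_objective}:
\[
\bx_{t'}=\tfrac{1-t'}{1-t''}\bx_{t''}+\sqrt{\sigma_{t'}^2-\bigl(\tfrac{1-t'}{1-t''}\bigr)^2\sigma_{t''}^2}\;\bepsilon.
\]
The variance term is chosen so that the composition reproduces $q_{t'}(\cdot\mid\bx_0)$. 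I need to check that this yields a valid coupling: the marginal of $(\bx_0,\bx_{t'})$ must agree with the one used to define $\mathbb{E}[\bx_0\mid\bx_{t'}]$. In this construction $\bx_0\to\bx_{t''}\to\bx_{t'}$ is a Markov chain, so $\bx_0$ is conditionally independent of $\bx_{t'}$ given $\bx_{t''}$.

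Second, I will apply the tower property:
\[
\mathbb{E}[\bx_0\mid\bx_{t'}]=\mathbb{E}\bigl[\mathbb{E}[\bx_0\mid\bx_{t''},\bx_{t'}]\,\big|\,\bx_{t'}\bigr]=\mathbb{E}\bigl[\mathbb{E}[\bx_0\mid\bx_{t''}]\,\big|\,\bx_{t'}\bigr].
\]
The second equality uses the conditional independence from the first step. The outer expectation is over the posterior $p(\bx_{t''}\mid\bx_{t'})$, which is exactly the reverse kernel in the statement. Substituting the optimality assumption $\hat{\bx}_\phi(\cdot,t)=\mathbb{E}[\bx_0\mid\bx_t]$ at both times $t'$ and $t''$ then gives the claim.

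The main obstacle is the first step. I have to justify that the rectified marginals $q_t$ admit the Markov coupling, and that the posterior used is the true reverse transition. Under the schedule $\alpha_t=1-t$, $\sigma_t=t$, the ratio $\sigma_t/\alpha_t=t/(1-t)$ is increasing, so the residual variance
\[
\sigma_{t'}^2-\bigl(\alpha_{t'}/\alpha_{t''}\bigr)^2\sigma_{t''}^2
\]
is nonnegative for $t''<t'$. The remaining work is to verify by a Gaussian convolution computation that the composed kernel equals $q_{t'}(\cdot\mid\bx_0)$. Once that holds, the rest is immediate.
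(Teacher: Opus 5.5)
Your proposal is correct and follows the same route as the paper. The Markov property of $(\bx_0,\bx_{t''},\bx_{t'})$ gives $\mathbb{E}[\bx_0\mid\bx_{t''},\bx_{t'}]=\mathbb{E}[\bx_0\mid\bx_{t''}]$, and the tower rule then gives the claim.

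Your explicit variance-matched forward kernel makes the argument more rigorous than the paper's. The paper only asserts the Markov structure, and that structure fails for the shared-noise coupling $\bx_t=(1-t)\bx_0+t\bepsilon$. One correction to your attribution: your kernel is not the one written in App.~\ref{app:k_step_objective}. That kernel's noise coefficient $\sigma_k(t)=\frac{t-t_k}{1-t_k}$ is too small to reproduce $q_t(\cdot\mid\bx_0)$ when fresh noise is injected with $t_k>0$, so use the residual-variance formula you actually wrote down.
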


\begin{proof}
Under the forward process, $\bx_t = \alpha_t\bx_0 + \sigma_t\bepsilon$ with $\bx_0 \sim p_0$, $\bepsilon \sim \mathcal{N}(\mathbf{0}, \mathbf{I})$, and $(\alpha_t, \sigma_t)$ specifying the noise schedule (\emph{e.g.}, $\alpha_t = 1-t$, $\sigma_t = t$ for the rectified-flow schedule). For $t'' < t'$, the Gaussian interpolation implies that $(\bx_0,\, \bx_{t''},\, \bx_{t'})$ forms a Markov chain in the increasing-noise direction, yielding
\[
\mathbb{E}[\bx_0 \mid \bx_{t''},\, \bx_{t'}] = \mathbb{E}[\bx_0 \mid \bx_{t''}].
\]
Starting from the right-hand side and applying optimality at $t''$:
\begin{align}
\mathbb{E}_{\tilde{\bx}_{t''} \sim p(\bx_{t''} \mid \bx_{t'})}\bigl[\hat{\bx}_\phi(\tilde{\bx}_{t''}, t'')\bigr]
&= \mathbb{E}\bigl[\mathbb{E}[\bx_0 \mid \bx_{t''}]\;\big|\;\bx_{t'}\bigr] \nonumber\\                                                                                                 
&= \mathbb{E}\bigl[\mathbb{E}[\bx_0 \mid \bx_{t''},\, \bx_{t'}]\;\big|\;\bx_{t'}\bigr] \nonumber\\
&= \mathbb{E}[\bx_0 \mid \bx_{t'}] = \hat{\bx}_\phi(\bx_{t'}, t'). \nonumber                                                                                                           
\end{align}                                               
The second equality uses the Markov property; the third is the tower rule.
\end{proof}

The consistency regularizer $\mathcal{L}_{\mathrm{cons}}^{(k)}$ (Eq.~\eqref{eq:consistency_loss}) penalizes deviations from this property, constraining the fake score model toward the
optimal solution even as the generator distribution shifts during training.

\section{Practical Consistency Loss Estimator}\label{app:consistency_practice}
Computing $\mathcal{L}_{\mathrm{cons}}^{(k)}$ (Eq.~\eqref{eq:consistency_loss}) requires evaluating $\mathbb{E}_{\tilde{\bx}_{t''}}[\hat{\bx}_\phi(\tilde{\bx}_{t''}, t'')]$, which is
intractable in closed form. Following \citet{daras2024consistent}, we draw two independent reverse-step samples $\tilde{\bx}_{t''}^1, \tilde{\bx}_{t''}^2 \sim p_\phi(\bx_{t''} \mid
\bx_{t'})$ to obtain the unbiased estimator:
\begin{equation}
\mathcal L_{\mathrm{cons}}^{(k)}(\phi)
\approx
\mathbb E_{\bx_{t'},\,\tilde{\bx}_{t''}^1,\tilde{\bx}_{t''}^2}
\!\left[
\left(
\hat{\bx}_{\phi}(\tilde{\bx}_{t''}^1,t'')
-
\hat{\bx}_{\phi}(\bx_{t'},t')
\right)\!^\top\!
\left(
\hat{\bx}_{\phi}(\tilde{\bx}_{t''}^2,t'')
-
\hat{\bx}_{\phi}(\bx_{t'},t')
\right)
\right].
\label{eq:consistency_loss_in_practice}
\end{equation}
In our experiments, we set
$t' = t_k$ and $t'' = t_{k+1}$ (adjacent endpoints of the timestep schedule), following \citet{daras2024consistent}.

\section{Hybrid Policy Gradient}
\label{app:hybrid_pg}

\subsection{Derivation}

We provide a formal derivation of the gradient estimator used for the reward term
\[
\nabla_\theta \E_{\hat{\bx}_0\sim p_\theta}[r(\hat{\bx}_0)].
\]
Since the few-step generator consists of \(K-1\) stochastic transitions followed by one deterministic final step, the resulting gradient decomposes naturally into a contribution from the stochastic transitions and a contribution from the deterministic terminal mapping.

\begin{proposition}[Hybrid policy gradient]
\label{prop:hybrid_pg}
Consider the few-step generative process
\[
\hat{\bx}_{t_0}\to \hat{\bx}_{t_1}\to \cdots \to \hat{\bx}_{t_{K-1}}\to \hat{\bx}_0,
\]
where for \(k=1,\dots,K-1\),
\[
\pi_\theta^{(k)}(\hat{\bx}_{t_k}\mid \hat{\bx}_{t_{k-1}})
=
\mathcal N\!\bigl(
\hat{\bx}_{t_k};
\bmu_\theta^{(k)}(\hat{\bx}_{t_{k-1}}),
\sigma_k^2\mathbf I
\bigr),
\]
and the final output is deterministic:
\[
\hat{\bx}_0 = G_\theta(\hat{\bx}_{t_{K-1}}, t_{K-1}).
\]
Define
\[
J(\theta):=\E_{\hat{\bx}_0\sim p_\theta}[r(\hat{\bx}_0)].
\]
Then
\begin{equation}
\begin{aligned}
\nabla_\theta J(\theta)
&=
\sum_{k=1}^{K-1}
\E_{\tau\sim p_\theta}\!\left[
r(\hat{\bx}_0)\,
\nabla_\theta \log \pi_\theta^{(k)}(\hat{\bx}_{t_k}\mid \hat{\bx}_{t_{k-1}})
\right] \\
&\quad+
\E_{\tau\sim p_\theta}\!\left[
\bigl(\nabla_{\hat{\bx}_0}r(\hat{\bx}_0)\bigr)^\top
\partial_\theta G_\theta(\hat{\bx}_{t_{K-1}}, t_{K-1})
\right].
\end{aligned}
\label{eq:app_hybrid_pg_prop}
\end{equation}
\end{proposition}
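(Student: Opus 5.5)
I will write $J(\theta)$ as an explicit integral over the trajectory and differentiate it. The product rule then splits the derivative into a score-function part, coming from the Gaussian transition densities, and a pathwise part, coming from the deterministic terminal map.

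\textbf{Step 1: trajectory integral.} The initial latent $\hat{\bx}_{t_0}\sim\mathcal N(\mathbf 0,\mathbf I)$ does not depend on $\theta$. The final output is $\hat{\bx}_0=G_\theta(\hat{\bx}_{t_{K-1}},t_{K-1})$, so I first integrate out this deterministic step and write
\[
J(\theta)=\int p(\hat{\bx}_{t_0})\prod_{k=1}^{K-1}\pi_\theta^{(k)}(\hat{\bx}_{t_k}\mid\hat{\bx}_{t_{k-1}})\; r\bigl(G_\theta(\hat{\bx}_{t_{K-1}},t_{K-1})\bigr)\,d\hat{\bx}_{t_0}\cdots d\hat{\bx}_{t_{K-1}}.
\]
The reason for this form is that $p_\theta$ on $\hat{\bx}_0$ generally has no density; pushing the deterministic map into the integrand sidesteps that issue. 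The integrand now depends on $\theta$ in two places: the $K-1$ Gaussian factors, and the argument of $r$.

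\textbf{Step 2: differentiate under the integral.} I will justify exchanging $\nabla_\theta$ and $\int$ by dominated convergence. This needs regularity assumptions, stated explicitly: $\bmu_\theta^{(k)}$ and $G_\theta$ are $C^1$ in $\theta$, $r$ is $C^1$, $\sigma_k>0$ (which holds since $\eta>0$ and $t_k>0$ for $k\le K-1$), and there are polynomial growth bounds on $r$, $\nabla r$, and the $\theta$-derivatives that hold locally uniformly in $\theta$. Under these bounds, Gaussian tails supply an integrable dominating function. I expect this step to be the main obstacle, or rather the only non-formal point. I will handle it by assumption and not try to derive it from the architecture.

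\textbf{Step 3: product rule.} The product rule gives two kinds of terms.
\begin{itemize}
\item For each $k\le K-1$, differentiating $\pi_\theta^{(k)}$ and using $\nabla_\theta\pi=\pi\,\nabla_\theta\log\pi$ gives $\E_\tau[r(\hat{\bx}_0)\nabla_\theta\log\pi_\theta^{(k)}]$. Because the variances $\sigma_k$ do not depend on $\theta$, this log-derivative is simply $\sigma_k^{-2}(\hat{\bx}_{t_k}-\bmu_\theta^{(k)})^\top\partial_\theta\bmu_\theta^{(k)}$, which is well defined.
\item Differentiating $r(G_\theta(\cdot))$ with the chain rule, holding the sampled $\hat{\bx}_{t_{K-1}}$ fixed, gives $\E_\tau[(\nabla r(\hat{\bx}_0))^\top\partial_\theta G_\theta(\hat{\bx}_{t_{K-1}},t_{K-1})]$.
\end{itemize}
Summing these terms yields \eqref{eq:app_hybrid_pg_prop}.

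One point needs care in the final expression: $\partial_\theta G_\theta$ is the partial derivative with respect to the explicit parameter dependence only. All dependence of $\hat{\bx}_{t_{K-1}}$ on $\theta$ is already accounted for by the score-function terms, so there is no double counting.

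\textbf{Step 4: sanity check.} Setting $K=1$ recovers pure reward backpropagation. Taking $G_\theta$ independent of $\theta$ recovers standard REINFORCE.
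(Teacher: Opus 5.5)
Your proposal is correct and follows essentially the same route as the paper. Both push the deterministic last step into the integrand, differentiate the integral over the intermediate states $(\hat{\bx}_{t_1},\dots,\hat{\bx}_{t_{K-1}})$ under the integral sign, and split the result by the product rule into the Gaussian log-derivative terms plus the chain-rule term through $G_\theta$ with $\hat{\bx}_{t_{K-1}}$ held fixed. The only difference is that you state the regularity and growth conditions needed to justify exchanging $\nabla_\theta$ and $\int$, which the paper leaves implicit.
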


\begin{proof}
Because the last step is deterministic, it is convenient to separate the stochastic and deterministic parts of the generation process. Let
\[
\zeta := (\hat{\bx}_{t_1},\dots,\hat{\bx}_{t_{K-1}})
\]
denote the intermediate random states. Conditioned on \(\hat{\bx}_{t_0}\), their density is
\begin{equation}
p_\theta(\zeta)
=
\prod_{k=1}^{K-1}
\pi_\theta^{(k)}(\hat{\bx}_{t_k}\mid \hat{\bx}_{t_{k-1}}).
\label{eq:app_zeta_density_prop}
\end{equation}
The terminal sample is then determined by
\[
\hat{\bx}_0 = G_\theta(\hat{\bx}_{t_{K-1}}, t_{K-1}),
\]
and the objective can be written as
\begin{equation}
J(\theta)
=
\E_{\zeta\sim p_\theta}\!\left[
r\bigl(G_\theta(\hat{\bx}_{t_{K-1}}, t_{K-1})\bigr)
\right].
\label{eq:app_obj_prop}
\end{equation}

Differentiating Eq.~\eqref{eq:app_obj_prop} under the integral sign gives
\begin{equation}
\begin{aligned}
\nabla_\theta J(\theta)
&=
\nabla_\theta
\int
p_\theta(\zeta)\,
r\bigl(G_\theta(\hat{\bx}_{t_{K-1}}, t_{K-1})\bigr)\, d\zeta \\
&=
\int
\nabla_\theta p_\theta(\zeta)\, r(\hat{\bx}_0)\, d\zeta
+
\int
p_\theta(\zeta)\, \nabla_\theta r(\hat{\bx}_0)\, d\zeta.
\end{aligned}
\label{eq:app_split_prop_1}
\end{equation}
Using
\[
\nabla_\theta p_\theta(\zeta)
=
p_\theta(\zeta)\nabla_\theta \log p_\theta(\zeta),
\]
we obtain
\begin{equation}
\nabla_\theta J(\theta)
=
\E_{\zeta\sim p_\theta}\!\left[
r(\hat{\bx}_0)\nabla_\theta \log p_\theta(\zeta)
\right]
+
\E_{\zeta\sim p_\theta}\!\left[
\nabla_\theta r(\hat{\bx}_0)
\right].
\label{eq:app_split_prop_2}
\end{equation}

By Eq.~\eqref{eq:app_zeta_density_prop},
\[
\log p_\theta(\zeta)
=
\sum_{k=1}^{K-1}
\log \pi_\theta^{(k)}(\hat{\bx}_{t_k}\mid \hat{\bx}_{t_{k-1}}),
\]
and therefore
\begin{equation}
\nabla_\theta \log p_\theta(\zeta)
=
\sum_{k=1}^{K-1}
\nabla_\theta \log \pi_\theta^{(k)}(\hat{\bx}_{t_k}\mid \hat{\bx}_{t_{k-1}}).
\label{eq:app_score_sum_prop}
\end{equation}
Substituting Eq.~\eqref{eq:app_score_sum_prop} into Eq.~\eqref{eq:app_split_prop_2} yields
\begin{equation}
\nabla_\theta J(\theta)
=
\sum_{k=1}^{K-1}
\E_{\zeta\sim p_\theta}\!\left[
r(\hat{\bx}_0)\,
\nabla_\theta \log \pi_\theta^{(k)}(\hat{\bx}_{t_k}\mid \hat{\bx}_{t_{k-1}})
\right]
+
\E_{\zeta\sim p_\theta}\!\left[
\nabla_\theta r(\hat{\bx}_0)
\right].
\label{eq:app_split_prop_3}
\end{equation}

For the second term, \(r(\hat{\bx}_0)\) depends on \(\theta\) through the final deterministic mapping \(G_\theta\). Hence, by the chain rule,
\begin{equation}
\nabla_\theta r(\hat{\bx}_0)
=
\bigl(\nabla_{\hat{\bx}_0}r(\hat{\bx}_0)\bigr)^\top
\partial_\theta G_\theta(\hat{\bx}_{t_{K-1}}, t_{K-1}).
\label{eq:app_chain_prop}
\end{equation}
Combining Eqs.~\eqref{eq:app_split_prop_3} and~\eqref{eq:app_chain_prop} proves Eq.~\eqref{eq:app_hybrid_pg_prop}.
\end{proof}

\subsection{Discussion}

Prop.~\ref{prop:hybrid_pg} separates the gradient into two parts. The first part comes from how the parameters affect the distribution of the stochastic intermediate states, and therefore involves the log-derivatives of the transition densities. The second part comes from the explicit dependence of the final deterministic output on the parameters, and is obtained by directly differentiating the terminal mapping.

A subtle point is that the above derivation applies the log-derivative identity only to the stochastic part of the trajectory. One could formally write the full trajectory distribution as
\[
p_\theta(\tau)
=
p(\hat{\bx}_{t_0})
\prod_{k=1}^{K-1}
\pi_\theta^{(k)}(\hat{\bx}_{t_k}\mid \hat{\bx}_{t_{k-1}})
\cdot
\delta\!\left(
\hat{\bx}_0-G_\theta(\hat{\bx}_{t_{K-1}}, t_{K-1})
\right),
\]
but this representation contains a Dirac delta corresponding to the deterministic last step. For this reason, directly applying the log-derivative form to the entire trajectory is unnecessary and less clean. By isolating the random intermediate states \(\zeta\), the derivation remains fully standard and avoids manipulating the logarithm of a Dirac delta.

Another potential concern is whether the second term in Eq.~\eqref{eq:app_hybrid_pg_prop} ignores the dependence of \(\hat{\bx}_{t_{K-1}}\) on \(\theta\). It does not. In Eq.~\eqref{eq:app_split_prop_2}, the dependence of the random trajectory \(\zeta\) on \(\theta\) is already accounted for by the term
$\E_{\zeta\sim p_\theta}\!\left[
r(\hat{\bx}_0)\nabla_\theta \log p_\theta(\zeta)
\right]$.
Therefore, in the remaining term
$\E_{\zeta\sim p_\theta}\!\left[
\nabla_\theta r(\hat{\bx}_0)
\right]$,
the sampled \(\zeta\) is treated as fixed, and only the explicit parameter dependence of the terminal mapping \(G_\theta\) is differentiated. Re-introducing the dependence of \(\hat{\bx}_{t_{K-1}}\) on \(\theta\) in this term would count the same contribution twice.

\section{Background on GRPO and Its Application to Few-step Generators}\label{app:grpo_background}                                                                                     
                                 
\noindent\textbf{Group relative policy optimization.}                                                                                                                                  
GRPO~\citep{deepseekmath} is a variance-reduced policy gradient method originally developed for language model alignment. Given a prompt $c$, GRPO samples a group of $N$ responses,   
computes a group-normalized advantage for each, and updates the policy with a clipped objective. The key idea is to use the group mean reward as a baseline, eliminating the need for a
separate value network.                                                                                                                                                               
                                                        
\noindent\textbf{Naive GRPO for few-step generators.}
In our setting, the few-step generator defines a $K$-step policy $\pi_\theta$ with Gaussian transitions (Sec.~\ref{sec:reinforce_gen}). A direct (naive) application of GRPO proceeds
as follows: for each prompt $c$, sample $N$ independent trajectories $\{\tau_i\}_{i=1}^N$ by drawing independent noise $\bepsilon_k^{(i)} \sim \mathcal{N}(\mathbf{0}, \mathbf{I})$ at 
every stochastic step $k = 1,\dots,K-1$ of each trajectory $i$. Compute rewards $r_i = r(\hat{\bx}_0^{(i)}, c)$ and group-normalized advantages
\[                                                                                                                                                                                     
A_i = \frac{r_i - \bar{r}}{\operatorname{std}(\{r_j\}_{j=1}^N)}, \qquad \bar{r} = \frac{1}{N}\sum_{j=1}^N r_j.
\]
The naive GRPO gradient estimator for the stochastic part is then
\begin{equation}
\nabla_\theta \mathcal{L}_{\mathrm{stoc}}^{\mathrm{GRPO}}
=
\frac{1}{N}\sum_{i=1}^N
\sum_{k=1}^{K-1}
A_i\,
\nabla_\theta \log \pi_\theta^{(k)}(\hat{\bx}_{t_k}^{(i)} \mid \hat{\bx}_{t_{k-1}}^{(i)}),
\label{eq:naive_grpo}
\end{equation}
where the Gaussian log-likelihood is  \[
\log \pi_\theta^{(k)}(\hat{\bx}_{t_k} \mid \hat{\bx}_{t_{k-1}})                                                                                                                        
= -\frac{1}{2\sigma_k^2}\|\hat{\bx}_{t_k} - \bmu_\theta^{(k)}(\hat{\bx}_{t_{k-1}})\|^2 + \mathrm{const},
\]
with $\bmu_\theta^{(k)}$ and $\sigma_k = t_k\sin(\eta\pi/2)$ determined by the CPS update (App.~\ref{app:cps_formula}).

\noindent\textbf{Limitation.}
Since all $K-1$ steps use independent noise across trajectories, the reward difference between any two trajectories reflects the combined effect of noise injected at \emph{all} steps
simultaneously. The advantage $A_i$ is therefore a noisy credit assignment signal, which means it cannot attribute reward variation to any specific step, leading to high gradient variance. This
motivates our SubGRPO (Sec.~\ref{sec:reinforce_gen}), which shares noise at non-selected steps to isolate contributions from a chosen subset.

\section{Detailed Algorithm of RTDMD}\label{app:algo}
We present a detailed algorithm pipeline of RTDMD in Algo.~\ref{alg:rtdmd}.
\begin{algorithm}[!ht]
\caption{RTDMD: Reward-Tilted Distribution Matching Distillation}
\label{alg:rtdmd}
\begin{algorithmic}[1]
\Require Teacher score $s_{\psi}$, generator $G_{\theta}$, fake score $s_{\phi}$, reward $r(\cdot)$, $K$-step schedule $\{t_k\}_{k=0}^K$, reward weight $\beta$.
\Ensure Reward-optimized $K$-step generator $G_{\theta}$.

\vspace{0.4em}
\Statex \textbf{Stage I: Ambient-Consistent Distribution Matching Distillation (cold start)}
\vspace{0.2em}
\For{$u=1,\ldots,T_{\mathrm{cold}}$}
    \State Roll out $G_{\theta}$ from noise; sample step index $k$ and construct noised state $x^{(k)}_t$ for $t\in[t_k,1]$.
    \State Update $s_{\phi}$ via the fake score objective (Eq.~\eqref{eq:fake_score_total}).
    \State Update $G_{\theta}$ via the AC-DMD generator objective (Eq.~\eqref{eq:subinterval-generator-gradient}).
\EndFor

\vspace{0.4em}
\Statex \textbf{Stage II: Reinforcing the Few-step Generator}
\vspace{0.2em}
\For{$u=1,\ldots,T_{\mathrm{rl}}$}
    \For{each prompt $c_j$ in batch}
        \State Sample step subset $S\!\subset\!\{1,\ldots,K\!-\!1\}$.
        \State Generate $N$ trajectories with shared/independent noise (Eq.~\eqref{eq:shared-noise}).
        \State Compute rewards $r_i = r(\hat{x}^{(i)}_0, c_j)$ and group-normalized advantages $A_i$.
    \EndFor
    \State Update $s_{\phi}$ via the fake score objective (Eq.~\eqref{eq:fake_score_total}).
    \State Update $G_{\theta}$ by descending along $\nabla_\theta\mathcal{L}_{\mathrm{total}}$ (Eq.~\eqref{eq:total_loss}).
\EndFor

\State \Return $G_{\theta}$.
\end{algorithmic}
\end{algorithm}

\section{More Implementation Details}\label{app:more_details}
For the 4-step generator ($K\!=\!4$), we uniformly partition the interval $[0,1]$ to obtain the pre-shift timestep schedule $[1.0, 0.75, 0.5, 0.25]$. For FLUX.2 4B, the Stage~1 generator is initialized from the FLUX.2 [klein] 4B checkpoint rather than the Base 4B model. All experiments use AdamW~\citep{loshchilov2018decoupled} with $\beta_1\!=\!0.9$,
$\beta_2\!=\!0.999$, and a constant learning rate. In Stage~1 (cold start), we use the same learning rate for the fake score model and generator. The learning rate is $1\!\times\!10^{-6}$ for FLUX.2 4B and $3\!\times\!10^{-5}$ for the SD series; in Stage~2, the fake score model retains its Stage~1 learning rate
while the generator learning rate is uniformly set to $3\!\times\!10^{-6}$. LoRA~\citep{lora} is applied to all linear layers within self-attention and cross-attention modules ($r\!=\!64$, $\alpha\!=\!32$). The GRPO clipping range is $[1\!-\!10^{-5},\;1\!+\!10^{-5}]$, and the reward-tilting coefficient $\beta$ is set to $1.0$ throughout. For multi-reward training, each reward is weighted equally. All training and rollout use \texttt{bf16} precision. Stage~1 uses 8 NVIDIA H20 GPUs for all models; Stage~2 uses 16 GPUs for FLUX.2 4B and 8 GPUs for the SD series.

\section{Discussion on the Sampling Schedule and Connection to DMD2}\label{app:discussion_sampling}

\noindent\textbf{Choice of $\eta$.}
As shown in Fig.~\ref{fig:eta}, we find that performance is relatively insensitive to the specific value of $\eta$ as long as sufficient stochasticity is introduced (\emph{i.e.}, $\eta\!\geq\!0.8$), with all such
configurations yielding comparable results. We attribute this to the fact that the subsequent GRPO optimization relies on adequate stochasticity in the intermediate transitions to produce diverse trajectories for effective exploration. We therefore set $\eta\!=\!0.9$ throughout our experiments.

\noindent\textbf{Connection and comparison to DMD2.}
We further find that the only difference between the original DMD2~\citep{dmd2} (which originally uses the CM scheduler) and our A-DMD with $\eta\!=\!1.0$ (where CPS reduces to the CM scheduler) lies in the noise-level interval used during training: $t$ in Eq.~\eqref{eq:gen_kl} ranges over $[0,1]$ for DMD2 but only over $[t_k,1]$ for A-DMD. This stems from a different choice of conditioning data for score/denoising matching: A-DMD re-noises from the actually generated intermediate state $\hat{\bx}_{t_k}$ and uses it directly as the training data, whereas DMD2 instead takes the generator's $x$-prediction $\hat{\bx}^{(k)}_\mathrm{pred}$ as a clean sample $\bx_0$ and re-noises it across the entire interval $[0,1]$.

Our design is more general across different schedulers (\emph{e.g.}, CM and Euler ODE), because $\hat{\bx}^{(k)}_\mathrm{pred}$ can be safely treated as a clean sample only under the CM parameterization. In a CM-trained model, the consistency property forces the network to map any noisy input back to the same underlying clean $\bx_0$ along the PF-ODE trajectory, so $\hat{\bx}^{(k)}_\mathrm{pred}$ behaves as a sample from the student's clean-data distribution. Under non-CM schedulers (\emph{e.g.}, the Euler ODE schedule with $\eta\!=\!0$), the model does not satisfy this consistency property and is not trained to directly map noisy inputs to clean samples, so $\hat{\bx}^{(k)}_\mathrm{pred}$ cannot be regarded as a sample from $p_0$. Treating it as if it were a clean $\bx_0$ and re-noising it over $[0,1]$ would therefore introduce a scheduler-dependent bias in the score-matching objective~\citep{fan2026phaseddmdfewstepdistribution}. By instead conditioning on the realized noisy state $\hat{\bx}_{t_k}$, which is by construction a sample from the student marginal $p_\theta^{(k)}$ (defined in Sec.~\ref{sec:ac-dm}), and matching the score only on the subinterval $[t_k,1]$, A-DMD remains a valid re-derivation of distribution matching for any $\eta\in[0,1]$ and any scheduler choice.

Empirically, in the special case $\eta\!=\!1.0$ A-DMD becomes structurally equivalent to DMD2 up to this training-interval choice, and as shown in Fig.~\ref{fig:eta} the corresponding curve converges to nearly identical final performance as the standard DMD2 setting. This confirms that A-DMD does not sacrifice quality in the CM regime; rather, the re-derivation is introduced primarily to provide a unified, scheduler-agnostic framework.
\begin{figure}[!ht]
   \centering
   \setlength{\abovecaptionskip}{0.2cm}
   \begin{minipage}[b]{0.5\linewidth}
        \centering
        \begin{subfigure}[tp!]{\linewidth}
        \centering
        \includegraphics[width=\linewidth]{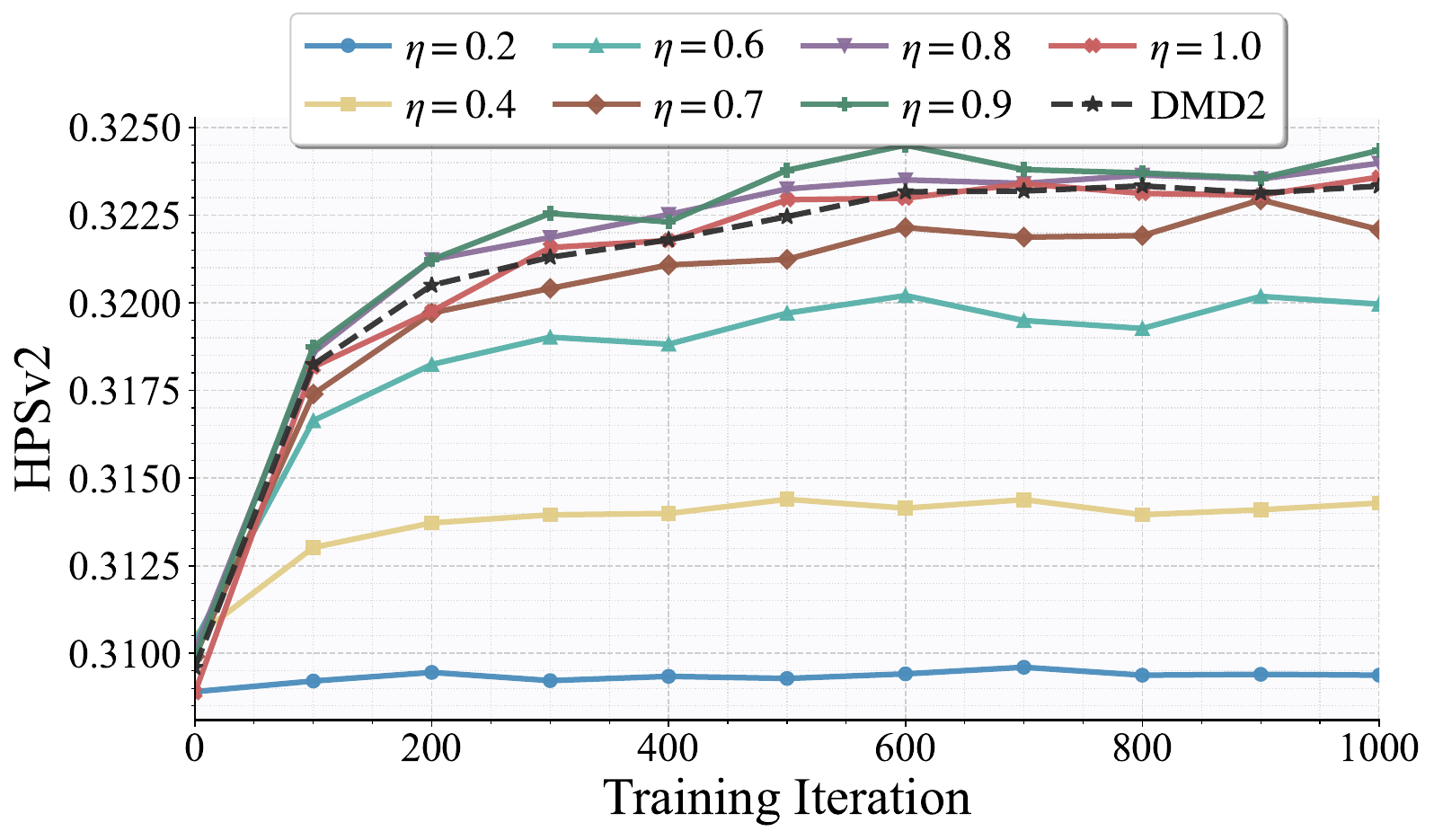}
        \end{subfigure}
   \end{minipage}
   \caption{\label{fig:eta} Evaluation curves when reinforcing the few-step generators with different $\eta$. We use \texttt{SD3.5-M}~\citep{sd35} with HPSv2~\citep{hpsv2} as the sole training reward. Each curve is optimized by A-DMD with different $\eta$ values combined with GRPO on stochastic transitions.}
\end{figure}

\begin{table}[!ht]\setlength{\tabcolsep}{2pt}
 \renewcommand{\arraystretch}{1.1}
  \centering
  \caption{GenEval results. Results for models other than \texttt{SD3.5-M} are from \citet{luo2026tdmr1reinforcingfewstepdiffusion} or original papers.} 
  \resizebox{0.70\linewidth}{!}{
  \begin{tabu}{l|c|cccccc}
\toprule
\makecell{\textbf{Model}} & \makecell{\textbf{Overall}} & \makecell{\textbf{Single}\\\textbf{Object}} & \makecell{\textbf{Two}\\\textbf{Object}} & \makecell{\textbf{Counting}} & \makecell{\textbf{Colors}} &
  \makecell{\textbf{Position}} & \makecell{\textbf{Attribution}\\\textbf{Binding}} \\
\midrule
\multicolumn{8}{l}{\cellcolor[gray]{0.92}\textbf{\textit{Autoregressive Models}}} \\
\midrule
Show-o~\citep{xie2025showosingletransformerunify}  & 0.53 & 0.95 & 0.52 & 0.49 & 0.82 & 0.11 & 0.28 \\
Emu3-Gen~\citep{wang2024emu3nexttokenpredictionneed}  & 0.54 & 0.98 & 0.71 & 0.34 & 0.81 & 0.17 & 0.21 \\
JanusFlow~\citep{ma2025janusflowharmonizingautoregressionrectified}  & 0.63 & 0.97 & 0.59 & 0.45 & 0.83 & 0.53 & 0.42 \\
Janus-Pro-7B~\citep{chen2025janusprounifiedmultimodalunderstanding}  & 0.80 & 0.99 & 0.89 & 0.59 & 0.90 & 0.79 & 0.66 \\
GPT-4o~\citep{openai2024gpt4ocard} & 0.84 & 0.99 & 0.92 & 0.85 & 0.92 & 0.75 & 0.61 \\
\midrule
\multicolumn{8}{l}{\cellcolor[gray]{0.92}\textbf{\textit{Diffusion Models}}} \\
\midrule
LDM~\citep{ldm}  & 0.37 & 0.92 & 0.29 & 0.23 & 0.70 & 0.02 & 0.05 \\
\texttt{SD1.5}~\citep{ldm}  & 0.43 & 0.97 & 0.38 & 0.35 & 0.76 & 0.04 & 0.06 \\
\texttt{SD2.1}~\citep{ldm}  & 0.50 & 0.98 & 0.51 & 0.44 & 0.85 & 0.07 & 0.17 \\
\texttt{SD-XL}~\citep{sdxl}  & 0.55 & 0.98 & 0.74 & 0.39 & 0.85 & 0.15 & 0.23 \\
DALLE-2~\citep{dalle2}  & 0.52 & 0.94 & 0.66 & 0.49 & 0.77 & 0.10 & 0.19 \\
DALLE-3~\citep{betker2023improving}  & 0.67 & 0.96 & 0.87 & 0.47 & 0.83 & 0.43 & 0.45 \\
\texttt{FLUX.1 Dev}~\citep{flux}  & 0.66 & 0.98 & 0.81 & 0.74 & 0.79 & 0.22 & 0.45 \\
\texttt{SD3.5-L}~\citep{sd3}  & 0.71 & 0.98 & 0.89 & 0.73 & 0.83 & 0.34 & 0.47 \\
SANA-1.5 4.8B~\citep{xie2025sana} & 0.81 & 0.99 & 0.93 & 0.86 & 0.84 & 0.59 & 0.65 \\
\midrule
\texttt{SD3.5-M}~\citep{sd3} & 0.63 & 0.98 & 0.78 & 0.50 & 0.81 & 0.24 & 0.52 \\
w/ Flow-GRPO~\citep{flowgrpo} & 0.95 & 1.00 & 0.99 & 0.95 & 0.92 & 0.99 & 0.86  \\
\midrule
\multicolumn{8}{l}{\cellcolor[gray]{0.92}\textbf{\textit{Few-Step Diffusion Models (4 NFE)}}} \\
\midrule
\texttt{SD3.5-L-Turbo}~\citep{sd3} & 0.70 & 0.94 & 0.84 & 0.55 & 0.79 & 0.58 & 0.56 \\
\texttt{SD3.5-M} w/ TDM~\citep{tdm} & 0.61 & 0.99 & 0.77 & 0.49 & 0.79 & 0.23 & 0.44 \\
\texttt{SD3.5-M}  w/ TDM-R1~\citep{luo2026tdmr1reinforcingfewstepdiffusion} & 0.92 & 1.00 & 0.96 & 0.88 & 0.85 & 0.93 & 0.91\\
\rowcolor{colSky!30}\texttt{SD3.5-M}  w/ RTDMD (Ours) & \textbf{0.94} & \textbf{1.00} & \textbf{0.98} & \textbf{0.95} & \textbf{0.90} & \textbf{0.87} & \textbf{0.94}\\
\bottomrule
\end{tabu}
}
    \label{tab:sd35}
\end{table}
\section{GenEval Results}\label{app:geneval_results}
We further validate our framework using GenEval~\citep{geneval}, a non-differentiable compositional generation benchmark, on \texttt{SD3.5-M}. Since the reward signal is non-differentiable, the hybrid policy gradient cannot backpropagate through the final deterministic step; optimization relies solely on AC-DMD and SubGRPO for the stochastic transitions. Despite this, as shown in Tab.~\ref{tab:sd35}, our method achieves an overall score of 0.94 with only 4 NFE, nearly matching
Flow-GRPO~\citep{flowgrpo} (0.95), which operates on the full multi-step model with 40 NFE and CFG~\citep{ho2022classifierfreediffusionguidance}, while outperforming TDM-R1~\citep{luo2026tdmr1reinforcingfewstepdiffusion} (0.92) under the same 4-step setting.

\section{Qualitative comparison for Ablation}\label{app:vis_ablation}
We provide qualitative visualizations corresponding to each ablation study in the main text. Specifically, Figs.~\ref{fig:sd35_cold}--\ref{fig:sd35_grpo} present samples generated by the configurations listed in Tabs.~\ref{tab:ac-dmd}--\ref{tab:hybrid-policy-gradient}, respectively. None of the samples shown are cherry-picked, and all are produced using randomly drawn prompts and noise seeds. These visualizations offer direct visual evidence for the effectiveness of each proposed component, complementing the quantitative analysis in the main text.

\begin{figure}[!ht]
   \centering
   \setlength{\abovecaptionskip}{0.2cm}
   \begin{minipage}[b]{0.75\linewidth}
   \begin{spacing}{0.7}
           {\tiny Text prompt: \textit{`` A portrait of a young Mark Hamill as Luke Skywalker from "Star Wars: Return of the Jedi" in shades of grey with touches of green by Jeremy Mann.''}}
        \vspace{0.05in}         \end{spacing}
       \begin{minipage}[b]{\linewidth}
            \centering
            \begin{subfigure}[tp!]{\textwidth}
            \centering
            \includegraphics[width=\linewidth]{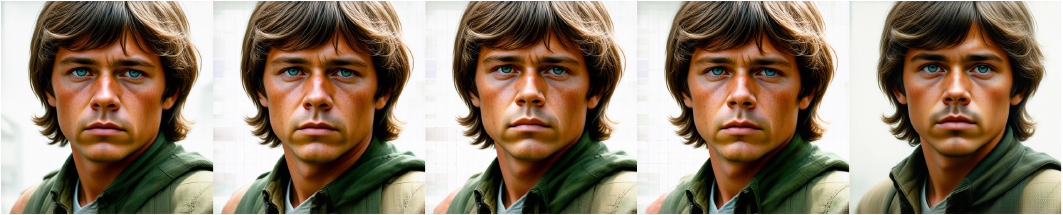}
            \end{subfigure}
       \end{minipage}
       \begin{spacing}{0.7}
       {\tiny Text prompt: \textit{``People are walking on the street on a rainy day.''}}
    \vspace{0.05in}         \end{spacing}
       \begin{minipage}[b]{\linewidth}
            \centering
            \begin{subfigure}[tp!]{\linewidth}
            \centering
            \includegraphics[width=\linewidth]{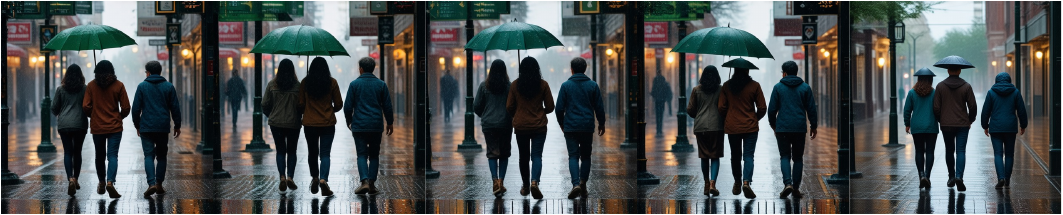}
            \end{subfigure}
       \end{minipage}
   \end{minipage}
    \vspace{-0.05in}
   \caption{Qualitative comparison of different distillation methods upon completion of cold-start training. From \textit{left} to \textit{right}: AC-DMD ($\gamma=0.001$), AC-DMD ($\gamma=0.005$), AC-DMD ($\gamma=0.01$), AC-DMD ($\gamma=0.1$), A-DMD. The columns correspond to the configurations in Tab.~\ref{tab:ac-dmd}, listed in bottom-to-top order.}
   \label{fig:sd35_cold}
\end{figure}
\begin{figure}[!ht]
   \centering
   \setlength{\abovecaptionskip}{0.2cm}
   \begin{minipage}[b]{0.75\linewidth}
   \begin{spacing}{0.7}
           {\tiny Text prompt: \textit{``The image is a close-up portrait of a demon goddess with tribal elements and intricate artwork by multiple artists.''}}
        \vspace{0.05in}         \end{spacing}
       \begin{minipage}[b]{\linewidth}
            \centering
            \begin{subfigure}[tp!]{\textwidth}
            \centering
            \includegraphics[width=\linewidth]{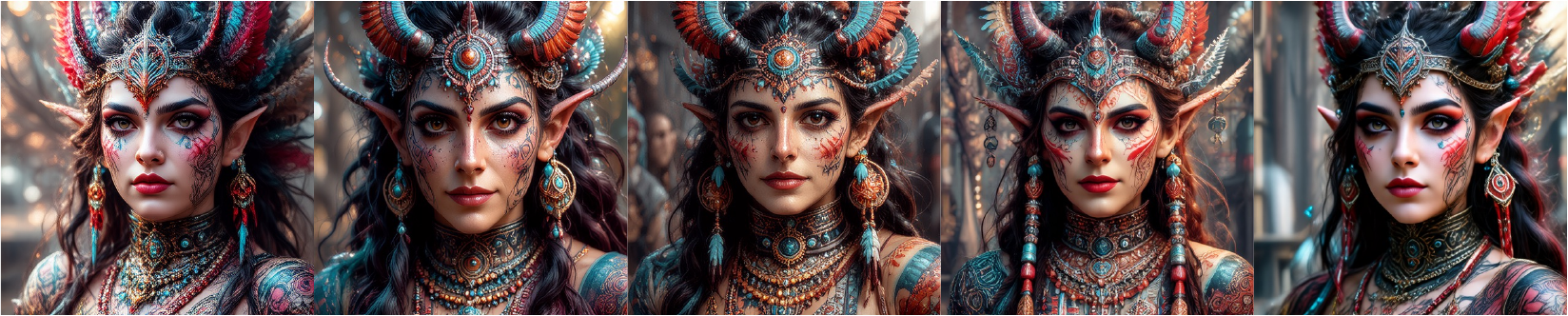}
            \end{subfigure}
       \end{minipage}
       \begin{spacing}{0.7}
       {\tiny Text prompt: \textit{``A triangular pink stop sign. A pink stop sign in the shape of a triangle.''}}
    \vspace{0.05in}         \end{spacing}
       \begin{minipage}[b]{\linewidth}
            \centering
            \begin{subfigure}[tp!]{\linewidth}
            \centering
            \includegraphics[width=\linewidth]{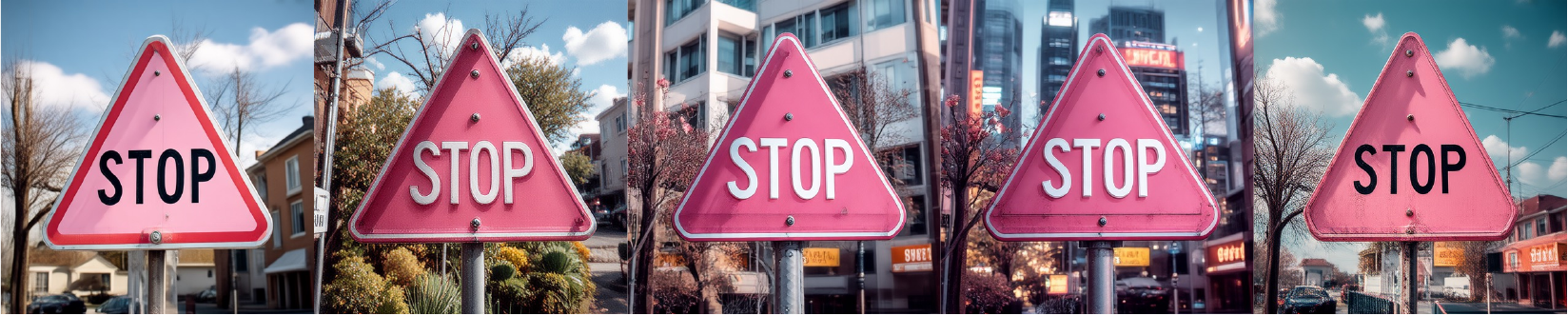}
            \end{subfigure}
       \end{minipage}
   \end{minipage}
    \vspace{-0.05in}
   \caption{Qualitative comparison of different distillation methods upon completion of two-stage training. From \textit{left} to \textit{right}: AC-DMD ($\gamma=0.001$), AC-DMD ($\gamma=0.005$), AC-DMD ($\gamma=0.01$), AC-DMD ($\gamma=0.1$), A-DMD. The columns correspond to the rows in Tab.~\ref{tab:ac-dmd} in bottom-to-top order.}
   \label{fig:sd35_dmd}
\end{figure}
\begin{figure}[!ht]
   \centering
   \setlength{\abovecaptionskip}{0.2cm}
   \begin{minipage}[b]{0.75\linewidth}
   \begin{spacing}{0.7}
           {\tiny Text prompt: \textit{``A fluffy baby sloth with a knitted hat trying to figure out a laptop, close up, highly detailed, studio lighting, screen reflecting in its eyes.''}}
        \vspace{0.05in}         \end{spacing}
       \begin{minipage}[b]{\linewidth}
            \centering
            \begin{subfigure}[tp!]{\textwidth}
            \centering
            \includegraphics[width=\linewidth]{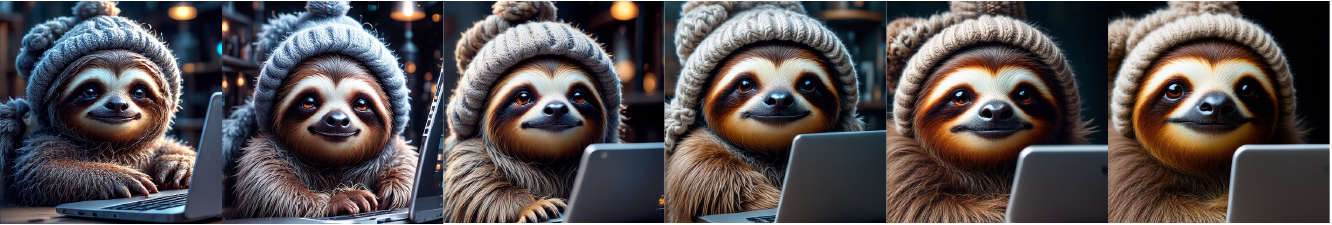}
            \end{subfigure}
       \end{minipage}
       \begin{spacing}{0.7}
       {\tiny Text prompt: \textit{``Octothorpe.''}}
    \vspace{0.05in}         \end{spacing}
       \begin{minipage}[b]{\linewidth}
            \centering
            \begin{subfigure}[tp!]{\linewidth}
            \centering
            \includegraphics[width=\linewidth]{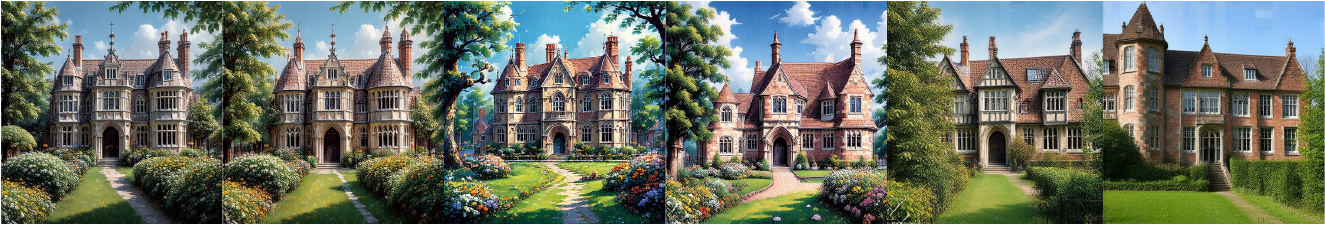}
            \end{subfigure}
       \end{minipage}
   \end{minipage}
    \vspace{-0.05in}
   \caption{Qualitative comparison of different reinforcement learning methods upon completion of two-stage training. From \textit{left} to \textit{right}: RTDMD ($M=2$), RTDMD ($M=2$) w/o $\mathcal{L}_{\mathrm{det}}$, RTDMD ($M=1$), RTDMD w/o $\mathcal{L}_{\mathrm{det}}$, GRPO~\citep{deepseekmath}, and $\emptyset$. The columns correspond to the rows in Tab.~\ref{tab:hybrid-policy-gradient} in bottom-to-top order.}
   \label{fig:sd35_grpo}
\end{figure}

\section{More Qualitative Results}\label{app:more_qualitative}
We provide additional visual comparisons in Fig.~\ref{fig:sd3_compare}, Fig.~\ref{fig:flux_compare}, and Fig.~\ref{fig:supple_flux}. Across diverse prompts, our RTDMD consistently produces images with superior visual quality and prompt adherence compared to baselines.
\begin{figure}[!ht]
   \centering
   \setlength{\abovecaptionskip}{0.2cm}
   \begin{minipage}[b]{0.75\linewidth}
   \begin{spacing}{0.7}
           {\tiny Text prompt: \textit{`` portrait of a beautiful woman wearing a futuristic headdress with daisies, puffballs, mushrooms, and other organic shapes.''}}
        \vspace{0.05in}         \end{spacing}
       \begin{minipage}[b]{\linewidth}
            \centering
            \begin{subfigure}[tp!]{\textwidth}
            \centering
            \includegraphics[width=\linewidth]{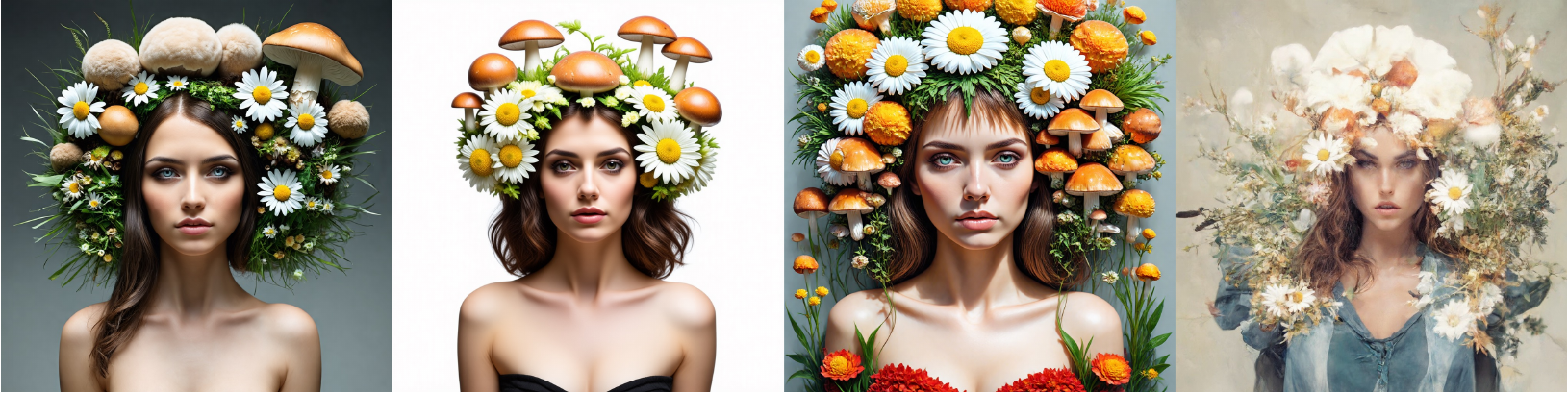}
            \end{subfigure}
       \end{minipage}
       \begin{spacing}{0.7}
       {\tiny Text prompt: \textit{``Lee Jin-eun in cyberpunk-themed photograph emerging from pink water.''}}
    \vspace{0.05in}         \end{spacing}
       \begin{minipage}[b]{\linewidth}
            \centering
            \begin{subfigure}[tp!]{\linewidth}
            \centering
            \includegraphics[width=\linewidth]{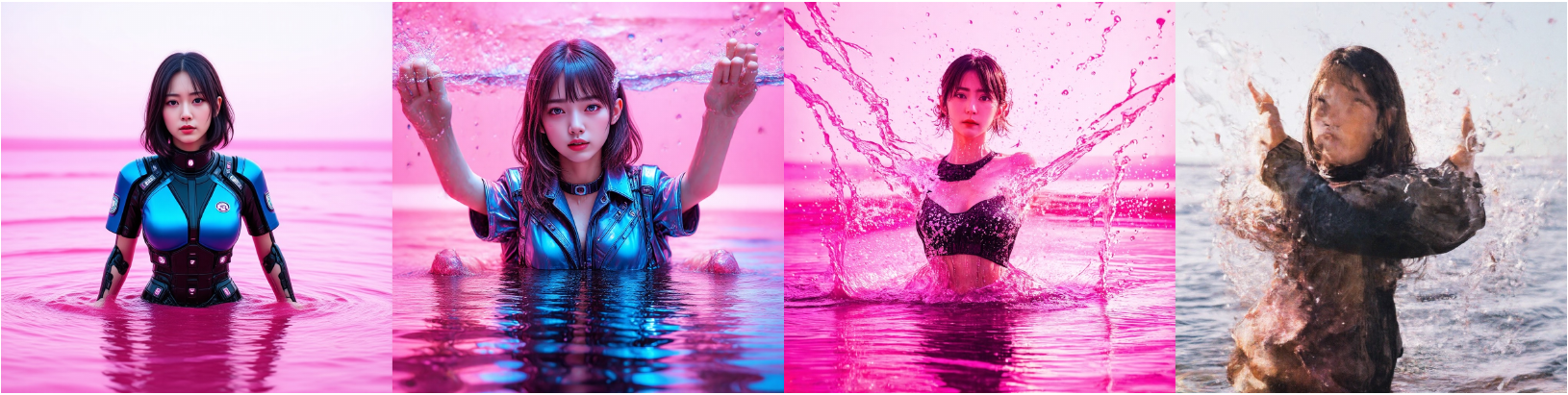}
            \end{subfigure}
       \end{minipage}
       \begin{spacing}{0.7}
       {\tiny Text prompt: \textit{``a couple of horse that are eating some grass.''}}
    \vspace{0.05in}         \end{spacing}
       \begin{minipage}[b]{\linewidth}
            \centering
            \begin{subfigure}[tp!]{\linewidth}
            \centering
            \includegraphics[width=\linewidth]{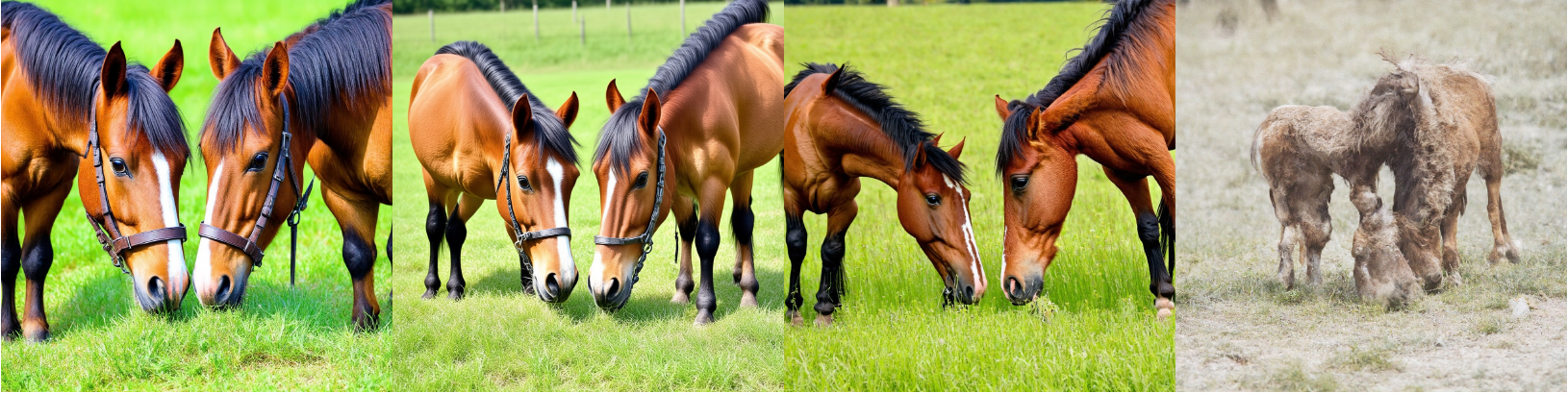}
            \end{subfigure}
       \end{minipage}
       \begin{spacing}{0.7}
       {\tiny Text prompt: \textit{``Small personal bathroom with a tiny entrance door.''}}
    \vspace{0.05in}         \end{spacing}
       \begin{minipage}[b]{\linewidth}
            \centering
            \begin{subfigure}[tp!]{\linewidth}
            \centering
            \includegraphics[width=\linewidth]{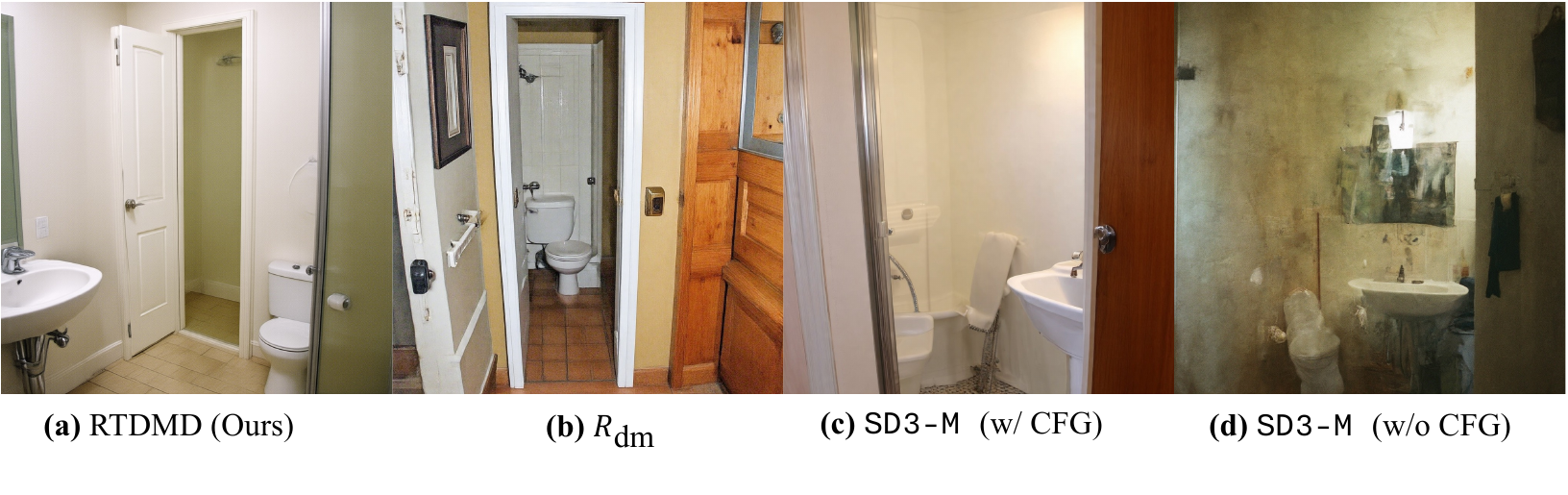}
            \end{subfigure}
       \end{minipage}
   \end{minipage}
    \vspace{-0.15in}
   \caption{Qualitative comparison for \texttt{SD3-M}~\citep{esser2024scaling}. Using identical noise inputs, our method outperforms others in both quality and prompt alignment, showing strong performance.}
   \label{fig:sd3_compare}
\end{figure}

\begin{figure}[!ht]
   \centering
   \setlength{\abovecaptionskip}{0.2cm}
   \begin{minipage}[b]{0.8\linewidth}
   \begin{spacing}{0.7}
       {\tiny Text prompt: \textit{``A maglev train going vertically downward in high speed, New York Times photojournalism.''}}
    \vspace{0.05in}         \end{spacing}
       \begin{minipage}[b]{\linewidth}
            \centering
            \begin{subfigure}[tp!]{\textwidth}
            \centering
            \includegraphics[width=\linewidth]{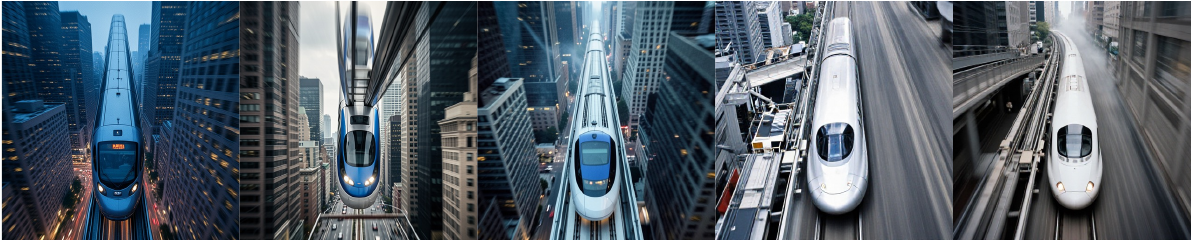}
            \end{subfigure}
       \end{minipage}\hfill
       \begin{spacing}{0.7}
       {\tiny Text prompt: \textit{``A white squirrel on a rocket in space.''}}
    \vspace{0.05in}         \end{spacing}
       \begin{minipage}[b]{\linewidth}
            \centering
            \begin{subfigure}[tp!]{\linewidth}
            \centering
            \includegraphics[width=\linewidth]{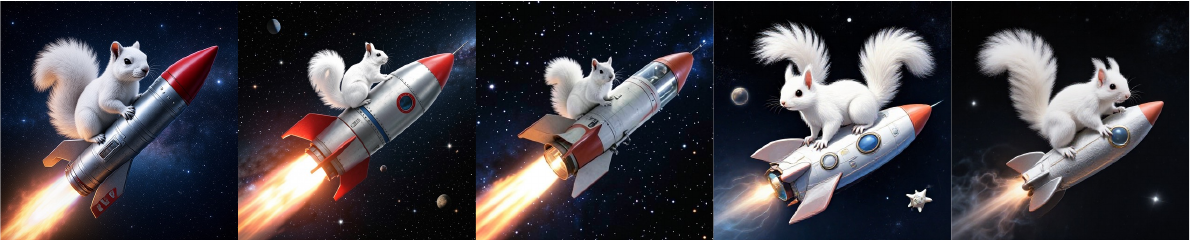}
            \end{subfigure}
       \end{minipage}
       \begin{spacing}{0.7}
       {\tiny Text prompt: \textit{``A 3D portrait of anime schoolgirls with grey hair submerged in dark water with dramatic lighting.''}}
    \vspace{0.05in}         \end{spacing}
       \begin{minipage}[b]{\linewidth}
            \centering
            \begin{subfigure}[tp!]{\linewidth}
            \centering
            \includegraphics[width=\linewidth]{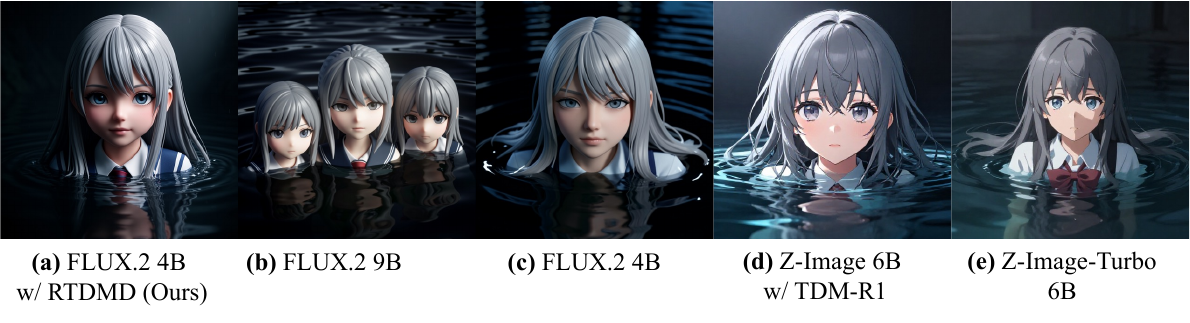}
            \end{subfigure}
       \end{minipage}
   \end{minipage}
    \vspace{-0.1in}
   \caption{Qualitative comparison for few-step diffusion models (4 NFE). Using identical noise inputs, our method outperforms others in both quality and prompt alignment, showing strong performance.}
   \label{fig:flux_compare}
\end{figure}

\begin{figure}[!ht]
   \centering
     \includegraphics[width=0.75\textwidth]{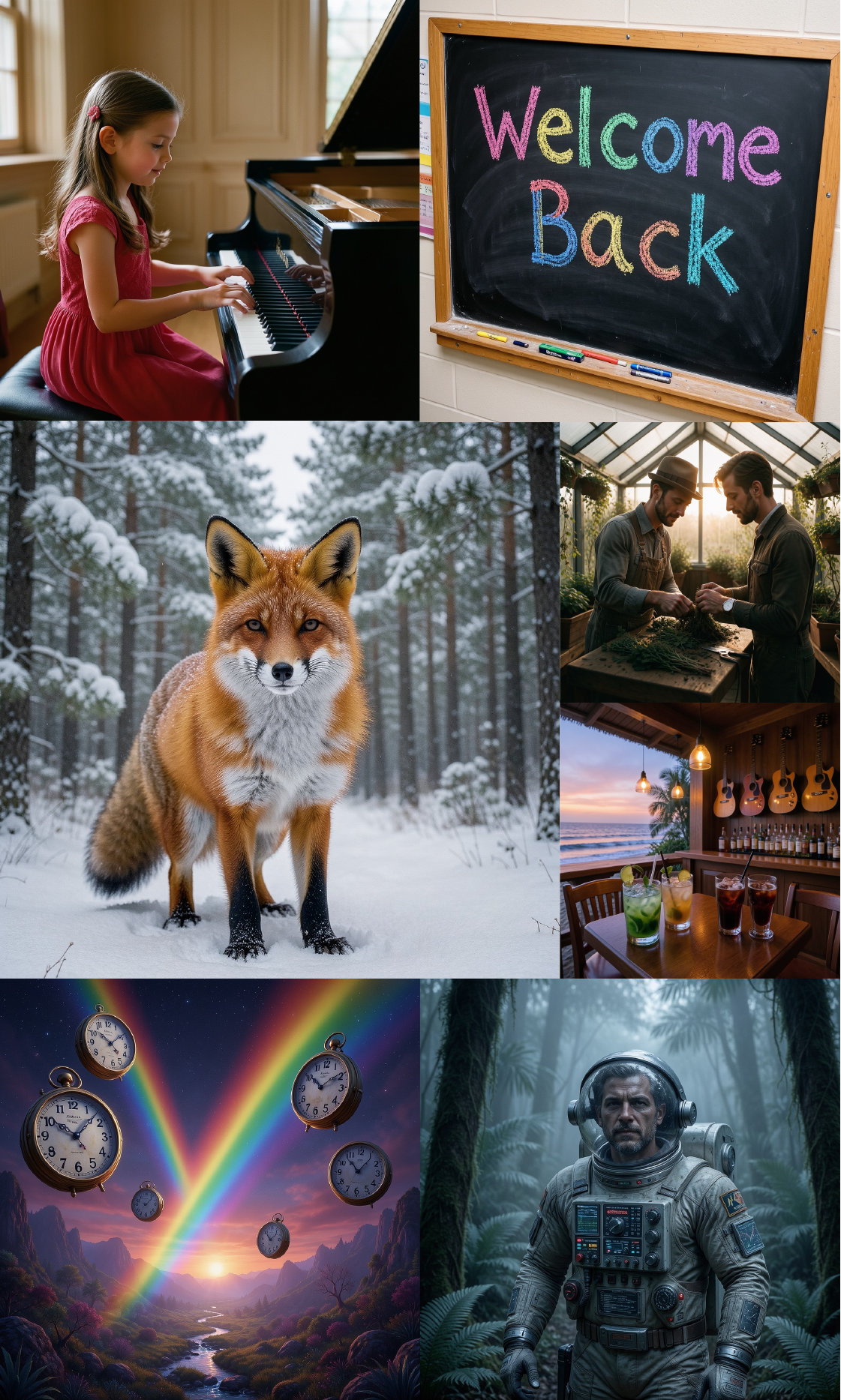}
     \caption{Visual generations produced by our RTDMD method under 4 NFE on FLUX.2 4B~\citep{flux-2} without applying classifier-free guidance (CFG)~\citep{ho2022classifierfreediffusionguidance}.}
    \label{fig:supple_flux}
\end{figure}

\clearpage

\end{document}